\RequirePackage[l2tabu, orthodox]{nag}
\documentclass[version=3.21, pagesize, twoside=off, bibliography=totoc, DIV=calc, fontsize=12pt, a4paper]{scrartcl}
\pdfgentounicode=1 
\input{glyphtounicode}
\usepackage[T1]{fontenc}
\usepackage[utf8]{inputenc}
\usepackage{newunicodechar}
\usepackage{lmodern}
\usepackage{textcomp}
\DeclareFontShape{OMX}{cmex}{m}{n}{
  <-7.5> cmex7
  <7.5-8.5> cmex8
  <8.5-9.5> cmex9
  <9.5-> cmex10
}{}

\DeclareFontFamily{U} {MnSymbolA}{}
\DeclareFontShape{U}{MnSymbolA}{m}{n}{
  <-6> MnSymbolA5
  <6-7> MnSymbolA6
  <7-8> MnSymbolA7
  <8-9> MnSymbolA8
  <9-10> MnSymbolA9
  <10-12> MnSymbolA10
  <12-> MnSymbolA12}{}
\DeclareFontShape{U}{MnSymbolA}{b}{n}{
  <-6> MnSymbolA-Bold5
  <6-7> MnSymbolA-Bold6
  <7-8> MnSymbolA-Bold7
  <8-9> MnSymbolA-Bold8
  <9-10> MnSymbolA-Bold9
  <10-12> MnSymbolA-Bold10
  <12-> MnSymbolA-Bold12}{}

\DeclareSymbolFont{MnSyA} {U} {MnSymbolA}{m}{n}

\DeclareMathSymbol{\rightlsquigarrow}{\mathrel}{MnSyA}{160}

\SetSymbolFont{largesymbols}{normal}{OMX}{cmex}{m}{n}
\SetSymbolFont{largesymbols}{bold}  {OMX}{cmex}{m}{n}

\tracinglostchars=2

\usepackage{booktabs}
\usepackage{calc}
\usepackage{tabularx}

\usepackage{xpatch} 
\newtoggle{LCpres}
\togglefalse{LCpres}

\usepackage{mathtools} 
	\mathtoolsset{showonlyrefs,showmanualtags}

\usepackage[round]{natbib}
\usepackage[french, english]{babel}
	\frenchbsetup{AutoSpacePunctuation=false,SuppressWarning=true}

\usepackage{listings} 
	\lstset{language=XML,tabsize=2,captionpos=b,basicstyle=\NoAutoSpacing}
\usepackage[nolist,smaller,printonlyused]{acronym}
\usepackage{fmtcount}
\usepackage[nodayofweek]{datetime}
\nottoggle{LCpres}{
	\usepackage[textsize=small]{todonotes}
}{
}
\makeatletter
\iftoggle{LCpres}{
	\usepackage{hyperref}
}{
	\usepackage[pdfusetitle]{hyperref}
	\pdfstringdefDisableCommands{%
		\let\thanks\@gobble
	}
}
\makeatother
\nottoggle{LCpres}{
	\usepackage{authblk}
	
}{
}
\usepackage{silence}
\WarningFilter{newunicodechar}{Redefining Unicode character}
\hypersetup{breaklinks, bookmarksopen, colorlinks, linkcolor=black, citecolor=black, urlcolor={blue!80!black}}
\iftoggle{LCpres}{
	\hypersetup{hyperfigures}
}{
}

\usepackage{bookmark}
	\ExplSyntaxOn
	\seq_new:N \g_oc_hrauthor_seq
	\NewDocumentCommand{\addhrauthor}{m}{
		\seq_gput_right:Nn \g_oc_hrauthor_seq { #1 }
	}
	\DeclareExpandableDocumentCommand{\hrauthor}{}{
		\seq_use:Nn \g_oc_hrauthor_seq {,~}
	}
	\ExplSyntaxOff
	{
		\catcode`#=11\relax
		\gdef\fixauthor{\xpretocmd{\author}{\addhrauthor{#2}}{}{}}%
	}
	\nottoggle{LCpres}{
		\usepackage{authblk}
		
		\fixauthor
		\AtBeginDocument{
		    \hypersetup{pdfauthor={\hrauthor}}
		}
	}{
	}

\nottoggle{LCpres}{
	\usepackage{enumitem} 
}{
}
\usepackage{ragged2e} 
\usepackage{siunitx} 
\sisetup{detect-all, locale = FR, strict}
\usepackage{braket} 
\usepackage{doi}


\usepackage{amsmath,amsthm}
\usepackage{amssymb}
\usepackage{bm}

\usepackage{environ}
\usepackage[french,english]{cleveref}
\usepackage{pgf}
\usepackage{pgfplots}
	\usetikzlibrary{babel, matrix, fit, plotmarks, calc, trees, shapes.geometric, positioning, plothandlers, arrows, shapes.multipart}
\pgfplotsset{compat=1.14}
\usepackage{graphicx}

\DeclareMathAlphabet\mathbfcal{OMS}{cmsy}{b}{n}

\graphicspath{{graphics/},{graphics-dm/}}
\DeclareGraphicsExtensions{.pdf}
%

\usepackage{printlen}
\uselengthunit{mm}
\usepackage{scrhack}
\usepackage{mathrsfs}
\DeclareFontFamily{U}{rsfs}{\skewchar\font127 }
\DeclareFontShape{U}{rsfs}{m}{n}{%
   <-6.5> rsfs5
   <6.5-8> rsfs7
   <8-> rsfs10
}{}

\iftoggle{LCpres}{
	\usepackage{appendixnumberbeamer}
	\setbeamertemplate{navigation symbols}{} 
	\usepackage{preamble/beamerthemeParisFrance}
	\usefonttheme{professionalfonts}
	\setcounter{tocdepth}{10}
	\input{preamble/beamer-patch.tex}
}{
}


\newcommand{\R}{ℝ}
\newcommand{\N}{ℕ}

\newcommand{\suchthat}{\;\ifnum\currentgrouptype=16 \middle\fi|\;}

\AtBeginDocument{%
	\renewcommand{\epsilon}{\varepsilon}
}

\newcommand{\restr}[2]{{#1|}_{#2}}

\newcommand{\allalts}{\mathcal{A}}

\newcommand{\alts}{A}
\newcommand{\alt}{a}

\newcommand{\prof}{\mathbf{R}}



\DeclareDocumentCommand{\lato}{ O{\prof} O{\alts} }{[#1 \mapsto #2]}
\newcommand{\tightoverset}[2]{%
  \mathop{#2}\limits^{\vbox to -.5ex{\kern-0.9ex\hbox{$#1$}\vss}}}
\DeclareDocumentCommand{\latoin}{ O{\prof} O{\alpha} }{[#1 \tightoverset{\in}{⟼} #2]}

\newcommand{\SGE}{S^{\mathcal{GE}}}

\newcommand{\AF}{\mathcal{AF}}
\newcommand{\labelling}{\mathcal{L}}
\newcommand{\labin}{\textbf{in}}
\newcommand{\labout}{\textbf{out}}
\newcommand{\labund}{\textbf{undec}}
\newcommand{\nonemptyor}[2]{\ifthenelse{\equal{#1}{}}{#2}{#1}}
\newcommand{\gextlab}[2][]{
	\labelling{\mathcal{GE}}_{(#2, \nonemptyor{#1}{\ibeatsr{#2}})}
}
\newcommand{\allargs}{S^*}
\newcommand{\args}{S}
\newcommand{\clargs}{S_γ}
\newcommand{\ar}{s}
\newcommand{\clar}{s_γ}
\newcommand{\ext}{\mathcal{E}}


\newcommand{\ileadsto}{\mathbin{\rightlsquigarrow}}

\newcommand{\ileadstoinv}{\mathbin{\rightlsquigarrow^{-1}}}
\newcommand{\ileadstor}[1]{\mathbin{\restr{\rightlsquigarrow}{#1}}}
\newcommand{\mleadsto}[1][\eta]{\mathbin{\rightlsquigarrow_{#1}}}
\newcommand{\mleadstoinv}{\mathbin{\rightlsquigarrow_\eta^{-1}}}
\newcommand{\ibeats}{\mathbin{\vartriangleright}}
\newcommand{\ibeatse}{\mathbin{\vartriangleright_\exists}}
\newcommand{\ibeatsst}{\mathbin{\vartriangleright_\forall}}

\newcommand{\nibeats}{\mathbin{⋫}}
\newcommand{\nibeatse}{\mathbin{⋫_\exists}}

\newcommand{\nibeatsst}{\mathbin{⋫_\forall}}

\newcommand{\ibeatsr}[1]{\mathbin{\restr{{\vartriangleright}}{#1}}}
\newcommand{\nibeatsr}[1]{\mathbin{\restr{{⋫}}{#1}}}
\newcommand{\ibeatsinv}{\mathbin{\vartriangleright^{-1}}}
\newcommand{\ibeatseinv}{\mathbin{\vartriangleright_\exists^{-1}}}
\newcommand{\mbeats}[1][\eta]{\mathbin{\vartriangleright_{#1}}}
\newcommand{\mbeatsinv}{\mathbin{\vartriangleright_\eta^{-1}}}






\nottoggle{LCpres}{
	\newtheorem{definition}{Definition}
	\newtheorem{theorem}{Theorem}
	\newtheorem{lemma}{Lemma}
	\newtheorem{condition}{Condition}


\theoremstyle{remark}

\newenvironment{example}{
	\pushQED{\qed}\examplex
}{
	\popQED\endexamplex
}
\newenvironment{remark}{
	\pushQED{\qed}\remarkx
}{
	\popQED\endremarkx
}
}{
}
\crefname{examplex}{example}{examples}
\crefname{condition}{condition}{conditions}
\makeatletter
\cref@addlanguagedefs{french}{%
	\crefname{examplex}{exemple}{exemples}%
}
\makeatother


\apptocmd{\sloppy}{\hbadness 10000\relax}{}{}

\bibliographystyle{abbrvnat}

\makeatletter
\patchcmd{\@doi}{dx.doi.org}{doi.org}{}{}
\makeatother



\makeatletter
\newcommand{\boldor}[2]{%
	\ifnum\strcmp{\f@series}{bx}=\z@
		#1%
	\else
		#2%
	\fi
}

\makeatother






\newunicodechar{ℕ}{\mathbb{N}}
\newunicodechar{ℝ}{\mathbb{R}}
\newunicodechar{−}{\ifmmode{-}\else\textminus\fi}
\newunicodechar{≠}{\ensuremath{\neq}}
\newunicodechar{≤}{\ensuremath{\leq}}
\newunicodechar{≥}{\ensuremath{\geq}}
\newunicodechar{≻}{\succ}
\newunicodechar{⊁}{\nsucc}
\newunicodechar{▷}{\triangleright}
\newunicodechar{⋫}{\ntriangleright}
\newunicodechar{→}{\ifmmode\rightarrow\else\textrightarrow\fi}
\newunicodechar{⇒}{\ensuremath{\Rightarrow}}
\newunicodechar{⇏}{\ensuremath{\nRightarrow}}
\newunicodechar{⇔}{\ensuremath{\Leftrightarrow}}
\newunicodechar{∪}{\cup}
\newunicodechar{∩}{\cap}
\newunicodechar{∧}{\land}
\newunicodechar{∨}{\lor}
\newunicodechar{¬}{\ifmmode\lnot\else\textlnot\fi}
\newunicodechar{…}{\ifmmode\ldots\else\textellipsis\fi}
\newunicodechar{×}{\ifmmode\times\else\texttimes\fi}
\newunicodechar{γ}{\ensuremath{\gamma}}
\newunicodechar{□}{\Box}

\newlength{\GraphsNodeSep}
\setlength{\GraphsNodeSep}{7mm}

\newlength{\MCDSCatHeight}
\setlength{\MCDSCatHeight}{6mm}
\newlength{\MCDSAltHeight}
\setlength{\MCDSAltHeight}{4mm}
\newlength{\MCDSAltSep}
\setlength{\MCDSAltSep}{2mm}
\newlength{\MCDSCatWidth}
\setlength{\MCDSCatWidth}{3cm}
\newlength{\MCDSEvalRowHeight}
\setlength{\MCDSEvalRowHeight}{6mm}
\newlength{\MCDSAltsToCatsSep}
\setlength{\MCDSAltsToCatsSep}{1.5cm}

\newlength{\MCDSArrowDownOffset}
\setlength{\MCDSArrowDownOffset}{0mm}

\tikzset{/Graphs/dot/.style={
	shape=circle, fill=black, inner sep=0, minimum size=1mm
}}
\tikzset{/MC/D/S/alt/.style={
	shape=rectangle, draw=black, inner sep=0, minimum height=\MCDSAltHeight, minimum width=2.5cm, anchor=north east
}}
\tikzset{MC/D/S/pref/.style={
	shape=ellipse, draw=gray, thick
}}
\tikzset{/MC/D/S/cat/.style={
	shape=rectangle, draw=black, inner sep=0, minimum height=\MCDSCatHeight, minimum width=\MCDSCatWidth, anchor=north west
}}
\tikzset{/MC/D/S/evals matrix/.style={
	matrix, row sep=-\pgflinewidth, column sep=-\pgflinewidth, nodes={shape=rectangle, draw=black, inner sep=0mm, text depth=0.5ex, text height=1em, minimum height=\MCDSEvalRowHeight, minimum width=12mm}, nodes in empty cells, matrix of nodes, inner sep=0mm, outer sep=0mm, row 1/.style={nodes={draw=none, minimum height=0em, text height=, inner ysep=1mm}}
}}

\tikzset{/GUI/button/.style={
	rectangle, very thick, rounded corners, draw=black, fill=black!40
}}

\newlength{\BDNodeSep}
\setlength{\BDNodeSep}{5mm}
\newlength{\BDDecLength}
\setlength{\BDDecLength}{3mm}
\newlength{\BDDecWidth}
\setlength{\BDDecWidth}{1mm}
\tikzset{/Beliefs/attacker/.style={
	shape=rectangle, draw, minimum size=8mm
}}
\tikzset{/Beliefs/supporter/.style={
	shape=circle, draw
}}
\tikzset{/Beliefs/undefeated/.style={
	shape=circle, draw
}}
\tikzset{/Beliefs/defeated/.style={
	shape=rectangle
}}
\tikzset{/Beliefs/attack/.style={
	arrows=-{>}
}}
\tikzset{/Beliefs/attackst/.style={
	arrows=-{>}, double
}}
\tikzset{/Beliefs/attackun/.style={
	arrows=-{>},
	decorate,
	decoration={coil, segment length=1.5mm, aspect=0},
}}
\tikzset{/Beliefs/nattack/.style={
	arrows=-{>},
	decoration={
		markings, mark=at position 0.5 with {
			\draw[-] ++ (-1mm, 1mm) -- (1mm, -1mm);
		}
	},
	postaction={decorate},
}}
\tikzset{/Beliefs/decisive/.style={
	alias=thisone,
	append after command={
      (thisone.south) edge[draw] +(0, -\BDDecLength) ++(0, -\BDDecLength) edge[draw] ++(-\BDDecWidth, 0) edge[draw] ++(\BDDecWidth, 0)
	}
}}

\input{preamble/acronyms}

\newcommand{\argscldec}{S_{γ\textnormal{dec}}}
\newcommand{\argscldef}{S_{γ\textnormal{def}}}
\newcommand{\argsclres}{S_{γ\textnormal{res}}}
\newcommand{\argsreplclres}{E_{γ\textnormal{res}}}
\newcommand{\argsreplcldec}{E_{γ\textnormal{dec}}}
\newcommand{\argsrreplcldec}{R_{γ\textnormal{dec}}}

\title{A formal framework for deliberated judgment%
	\thanks{This is the postprint version of the article published in Theory and Decision, \url{https://doi.org/10.1007/s11238-019-09722-7}. 	The text is identical, except for minor wording modifications.}
}
\author{Olivier Cailloux}
\author{Yves Meinard}
\affil{Université Paris-Dauphine, PSL Research University, CNRS, LAMSADE, 75016 PARIS, FRANCE\\
	\href{mailto:olivier.cailloux@dauphine.fr}{olivier.cailloux@dauphine.fr}
}
\hypersetup{
	pdfsubject={preference},
	pdfkeywords={decision analysis, justification, empirical validation, methodology}
}

\begin{document}
\maketitle

\begin{abstract}
While the philosophical literature has extensively studied how decisions relate to arguments, reasons and justifications, decision theory almost entirely ignores the latter notions.
In this article, we elaborate a formal framework in order to introduce in decision theory the stance that decision-makers take towards arguments and counter-arguments.We start from a decision situation, where an individual requests decision support. We formally define, as a commendable basis for decision-aid, this individual’s deliberated judgment, a notion inspired by Rawls' contributions to the philosophical literature, and embodying the requirement that the decision-maker should carefully examine arguments and counter-arguments.  
We explain how models of deliberated judgment can be validated empirically.
We then identify conditions upon which the existence of a valid model can be taken for granted, and analyze how these conditions can be relaxed.
We then explore the significance of our framework for the practice of decision analysis.
Our framework opens avenues for future research involving both philosophy and decision theory, as well as empirical implementations. 
\end{abstract}

\section{Introduction}
\label{sec:intro}

Introducing their “reason-based theory of choice”, \citet{dietrich_reason-based_2013} noticed that, although the philosophical literature has largely illustrated the usefulness of the concepts of reasons and arguments to think through action and decisions, decision theory strives to account for the latter exclusively in terms of preferences and beliefs. Despite \citeauthor{dietrich_reason-based_2013}’s \citeyearpar{dietrich_reason-based_2013, dietrich_reason-based_2016} efforts, the gap remains large between philosophical and choice theoretic approaches.

This gap echoes a classical dichotomy in “moral sciences” between, on the one hand, first-person justifications of one's acts in terms of reasons and arguments structuring these reasons, and on the other hand, third-person representations in terms of beliefs and preferences \citep{hausman_preference_2011}. By neglecting reason-based and other argumentative accounts, decision theory tends to devalue decision-makers' understanding of their own actions.

This gap has tended to insulate decision theory from important philosophical debates in the past thirty to forty years. Among the most influential approaches in these debates, \citet{scanlon_what_2000} highlighted the links between reasons, justification and moral notions such as fairness and responsibility, \citeauthor{habermas_theorie_1981}’ \citeyearpar{habermas_theorie_1981} “theory of communicative action” articulated the importance of justification and argumentation as distinctive features of rational action, and \citet{rawls_political_2005} launched the debates on the “acceptability” \citep{estlund_democratic_2009} of reasons and arguments for public justification.

This gap also has important practical implications for decision analysis, by complicating the task for analysts to explain the recommendations they give to their clients. This, in turn, casts doubts on these recommendations, which appear to be imposed to rather than endorsed by decision-makers.

In this article, we aim to participate in unlocking this situation, by elaborating a framework designed to allow decision analysts to provide recommendations that decision-makers truly endorse, in empirical reality.

For that purpose, we introduce, as a commendable basis for recommendation, the “deliberated judgments” of the decision-maker. Roughly stated, these “deliberated judgments” represent the propositions that the decision-maker will consider to be well-grounded, if he duly takes into account all the relevant arguments. This concept is inspired by \citeauthor{goodman_fact_1983}’s \citeyearpar{goodman_fact_1983} and \citeauthor{rawls_theory_1999}’ \citeyearpar{rawls_theory_1999} notion of reflective equilibrium. It also owes much to \citeauthor{roy_multicriteria_1996}'s \citeyearpar{roy_multicriteria_1996} view that an important part of the decision support interaction consists, for the analyst, in ensuring that the aided individual understands and accepts the reasoning on which the prescription is based. 

This article is organized as follows. In \cref{sec:core}, we define our core concepts, including the central concept of deliberated judgments. In \cref{sec:empirical}, we then explore the issue of how empirical data come into play and are involved in the validation of models. This illustrates the empirical aspect of our framework, which distinguishes it from standard prescriptive approaches. Obviously enough, at this stage, the pivotal issue is to determine how one can say anything about “deliberated judgments”, given that, for any non-trivial decision, the potentially relevant arguments are infinitely numerous. Lastly, \cref{sec:discussion} discusses the significance of our approach for the practice of decision analysis and outlines future empirical applications. 

\section{Core concepts and notations}
\label{sec:core}

In this section, we start by presenting the general setting of our approach, including our understanding of arguments and of the topic on which the individual aims to take a stance. We then introduce our formalization of argumentative disposition, capturing an individual's attitude towards arguments. This eventually allows us to present our notion of “deliberated judgment”.

\subsection{General setting}
Our approach starts from and is largely structured by the point of view of decision-analysis. We accordingly assume that a decision situation has been identified: we admit that there is an individual $i$ who requests decision support to answer questions such as: “is action $a$ better than action $b$?”, or “which beliefs should I have about such or such matter?”. We consider that a topic $T$ -- a set of propositions on which the decision analysis process aims to lead the decision-maker to take a stance -- is defined.%
\footnote{We remain at a fairly abstract level in our conceptualization of the topic. We accordingly set aside all the issues concerning the construction of problems and the evolution of their meaning as the decision process unfolds in concrete decision situations \citep{rosenhead_rational_2001}.}
We do not formally define propositions and simply understand the notion in its ordinary sense. For example, a proposition can be a claim spelled out in a text in a natural language, such as the claim that action $a$ is the most appropriate action for $i$ in a given decision situation.

We also consider arguments that can be used by $i$ to make up her mind about propositions in $T$. 
Here we understand the notion of argument in a large sense: anything that can be used to support a proposition, or undermine the effectiveness of such a support, is an argument. In the latter case, we talk about a counter-argument. Arguments as we understand them can encompass a huge diversity, ranging from very basic arguments that can be stated in a couple of words, to intricate arguments embedding numerous sub-arguments associated to one another in complex ways.

Let us then define the set $\allargs$ that contains all the arguments that one uses when trying to make up one’s mind about $T$. 
$\allargs$ can be understood in a “pragmatic” sense, as the set of all the arguments available around the temporal window of the decision process. It can also be understood in an “idealistic” sense, as the set of all the arguments that can possibly be raised, including those that humankind has not yet discovered.%
\footnote{Because no one has a concrete access to such an idealistic set of all the arguments, we expect that this concept will be mainly useful for philosophical explorations, and that the pragmatic interpretation will prevail in practical applications.}

Observe that under both interpretations, in all decision situations but the most trivial, it will be untenable to assume that the analyst knows all of $\allargs$: the analyst will only know a strict subset $\args \subset \allargs$, containing the arguments that she has been able to gather.%
\footnote{Even under the pragmatic interpretation, claiming that $\args = \allargs$ would mean that there is no relevant knowledge beyond what the analyst can find by studying the literature and consulting experts and stakeholders, but also that the list of arguments she has found captures all the semantic and linguistic subtleties that could distinguish alternative formulations of arguments.}
An important part of our work in this article will be to identify conditions allowing to draw conclusions relating to $\allargs$ despite the fact that no one ever knows more than a strict subset of $\allargs$. 

\begin{example}[Ranking]
Let us simply illustrate the content of the concepts introduced so far. Let $\allalts$ be a set of alternatives that $i$ is interested in ranking. For all $\alt_1 ≠ \alt_2 \in \allalts$, define $t_{\alt_1 \succ \alt_2}$ as the sentence: “$\alt_1$ ought to be ranked above $\alt_2$”, and $t_{\alt_1 \sim \alt_2}$ as “$\alt_1$ ought to be ranked \emph{ex-æquo} with $\alt_2$”. Define $T = \bigcup_{\alt_1 ≠ \alt_2 \in \allalts} \{t_{\alt_1 > \alt_2}, t_{\alt_1 \sim \alt_2}\}$ as the set of all such sentences. The topic $T$ represents the propositions on which $i$ is interested to make up her mind. Define $\allargs$ as the set of all strings corresponding to sentences in English. This set contains formulations of all the arguments that people can think about and use to make up their mind about the topic, and much more. An example of an argument is $\ar = $ “Alternative $\alt_1$ ought to be ranked above $\alt_2$ because $\alt_1$ is better than $\alt_2$ on every criterion relevant to this problem”.
\end{example}

Our aim in the remainder of this section is to define formally $i$’s perspective towards the topic after he has considered all the arguments that are possibly relevant to the situation. We term this: $i$'s \ac{DJ}.

\subsection{Argumentative disposition}
\label{sec:AS}
To define $i$’s \ac{DJ}, we need to capture $i$'s attitude towards arguments. Importantly, we also need to capture the fact that $i$ may change her opinion about arguments and their relative strengths. She can change her mind because of reasons independent of her endeavor to tackle the problem she addresses, for example depending on her mood.
More interestingly, $i$ will possibly change her mind when confronted with new arguments. For example, imagine that $i$ has heard about two arguments, $\ar_1$ and $\ar_2$, and she thinks that $\ar_2$ turns $\ar_1$ into an ineffective argument. But then she comes to realize that $\ar_2$ is in turn rendered ineffective by a third argument, $\ar_3$. After having thought about $\ar_3$, it might be that $i$ no longer considers that $\ar_2$ undermines $\ar_1$.

Note that for simplicity's sake, we say that an argument becomes ineffective (because of another argument) to mean that it becomes ineffective in its ability to support some proposition or to render other arguments ineffective.

Le us introduce our formalism to account for such a situation.%
\footnote{Our approach to formalize this concept is inspired by formal argumentation theory in artificial intelligence \citep{dung_acceptability_1995, rahwan_argumentation_2009}. However, the latter approach is not sufficient to empirically investigate $i$'s attitude towards arguments, because it neglects two crucial tasks.
First, this literature does not investigate the role that the decision analyst plays when she interacts with a decision-maker: should she remain a neutral observer, or should she interact more tightly with the decision-maker by providing him with arguments and counter-arguments liable to lead him to change his mind? 
Second, this literature does not put emphasis on the specific challenges involved in interacting with a decision-maker to identify empirically the arguments he endorses. Most of the time, this literature considers situations where the relation between arguments can be computed from a given logical representation of the arguments \citep{besnard_elements_2008} or is given \emph{a priori} \citep{baroni_semantics_2009}, possibly integrating uncertainties \citep{hunter_probabilistic_2014} and dynamics \citep{rotstein_dynamic_2010, marcos_dynamic_2011, dimopoulos_control_2018}.

Its most common use assumes that it is possible to establish the objective relations between arguments. In our example, $\ar_3$ would be considered to objectively attack $\ar_2$ and $\ar_2$ to objectively attack $\ar_1$. However, in some cases, it might be difficult, or perhaps even impossible, to determine such objective relations. In any case, this distinction is superfluous if the goal is to inquire about $i$’s opinion about these relations between arguments.
Other proposals in formal argumentation theory \citep{amgoud_reasoning_2002, bench-capon_persuasion_2003, amgoud_making_2008, amgoud_using_2009, bench-capon_abstract_2009, ferretti_approach_2017} supplement an objective attack relation with information representing $i$’s subjectivity, such as his values or his preference over arguments. Such approaches seem closer to our aim, but they also use an objective attack relation, in addition to the subjective information. Furthermore, this approach assumes that it is possible to distinguish between, on the one hand, cases where $\ar_3$ attacks $\ar_2$ but $i$ does not deem this attack important, and on the other hand situations where $\ar_3$ does not attack $\ar_2$. This assumption is also unnecessary for our purpose. Because our aim is mainly empirical, we propose to use another formalism, more adapted to our specific purpose, and leave aside here the task of more fully exploring the relations with proposals in formal argumentation theory such as dynamic argumentation.}

Let us start by defining a set of possible perspectives $P$ that $i$ can have towards the topic $T$. A perspective $p \in P$ captures all the elements determining how $i$ would react to arguments in $\allargs$. In $p$, $i$ has a specific set of arguments in mind, which can partly determine his reaction to other arguments in $\allargs$. But other elements can come into play, such as (to come back to our example above) his mood. 

If the decision analyst provides $i$ with a new argument $\ar$, this might lead $i$ to switch from $p$ to another perspective $p'$ integrating both $\ar$ and the arguments that $i$ had in mind in $p$, and possibly other arguments that $i$ might have been led to construct when trying to make up his mind about $\ar$ and its implications. $i$'s perspective can also change over time, because he forgets some arguments.

We forcefully emphasize that we do not claim to be able to provide a complete account of all the elements encapsulated in this notion of perspective. In fact, our approach does not even require to believe that it is possible for anyone to capture the content of perspectives, or more generally to directly measure details about $i$'s internal states of mind. The notion of perspective merely serves as an abstract device allowing to ground the idea that $i$ may have changing attitudes towards some pairs of arguments.

Based on these notions, given $T$ and $\allargs$, define $i$'s argumentative disposition towards $T$ as $(\ileadsto, \ibeatse, \nibeatse)$. These three relations, described here below, constitute the formal primitives of our concept of argumentative disposition.
\begin{description}
	\item[$\ileadsto$] is a relation from $\allargs$ to $T$. An argument $\ar$ \emph{supports} a proposition $t$, denoted by $\ar \ileadsto t$, iff $i$ considers that $\ar$ is an argument in favor of $t$. We emphasize that this definition should be understood in a conditional sense: $\ar \ileadsto t$ means that $i$ considers that, if $\ar$ holds in her eyes, then she should endorse $t$, but this does not say anything about whether she thinks that $\ar$ holds. An argument $\ar$ may support several propositions in $i$'s view, or none.
	\item[$\ibeatse$] is a binary relation over $\allargs$ representing whether $i$ considers that a given argument trumps another one in \emph{some} perspective. Let $\ar_1, \ar_2 \in \allargs$ be two arguments. We note $\ar_2 \ibeatse \ar_1$ ($\ar_2$ \emph{trumps} $\ar_1$) iff there is at least one perspective within which $i$ considers that $\ar_2$ turns $\ar_1$ into an ineffective argument.%
	\footnote{Note that, contrary to the usual assumption in formal argumentation theory, we do not consider it possible that both $\ar_2$ trumps $\ar_1$ and $\ar_1$ trumps $\ar_2$ \emph{in a given perspective}. This is a choice of modelization, and not an hypothesis about the way $i$ thinks: for $\ar_2 \ibeatse \ar_1$ to hold, by definition of our “trump” relation, $\ar_2$ must be a sufficiently strong argument to turn $\ar_1$ into an ineffective argument. If, on the contrary, $i$ considers that $\ar_2$ is a plausible argument defending some claim incompatible with $\ar_1$, but not sufficiently strong to defeat $\ar_1$, then we model it by $\ar_2 \nibeatse \ar_1$ and $\ar_1 \nibeatse \ar_2$. Our choice permits to reduce our informational requirements, as there are fewer cases to be distinguished (our framework treats in the same way situations where two arguments trump each other and situations where none trumps the other). Note however that we do allow for the possibility that $\ar_2 \ibeatse \ar_1$ and $\ar_1 \ibeatse \ar_2$: this can happen by $i$ adopting each of those two attitudes in two different perspectives. Hence, our choice of modelization does not translate in any formal restriction. This note only serves to make the semantics of the notion encapsulated by our “trump” relation clear.}
	 Let us emphasize that we are concerned with how $i$ sees $\ar_2$ and $\ar_1$, not about whether $\ar_2$ should be considered to be a good argument to trump $\ar_1$ by any independent standard. 
	\item[$\nibeatse$] is a binary relation over $\allargs$ defined in a similar way: $\ar_2 \nibeatse \ar_1$ iff there is at least one perspective within which $i$ does not consider that $\ar_2$ turns $\ar_1$ into an ineffective argument.
\end{description}
We assume that $\forall \ar_2, \ar_1 \in \allargs: ¬(\ar_2 \ibeatse \ar_1) ⇒ \ar_2 \nibeatse \ar_1$.

We consider that it is possible to query $i$ about the trump relation between two arguments, and thus obtain information about $\ibeatse$, to the following limited extent: $i$ may be presented with two arguments, $\ar_1$ and $\ar_2$, and asked whether he thinks that $\ar_2$ trumps $\ar_1$, or $\ar_1$ trumps $\ar_2$, or neither. In any case, we consider that $i$ answers from the perspective he is currently in (to which we have no other access than through this query). Thus, if $i$ answers that $\ar_2$ trumps $\ar_1$, we know that $\ar_2 \ibeatse \ar_1$. Indeed, in such a case we know that there is at least one perspective within which he thinks that $\ar_2$ trumps $\ar_1$: namely, the perspective that he currently has. Conversely, if $i$ answers that $\ar_2$ does not trump $\ar_1$, we know that $\ar_2 \nibeatse \ar_1$.%
\footnote{Another way of viewing the relations $\ibeatse$ and $\nibeatse$ goes as follows. Given a perspective $p$, define $\ibeats_p$ as a binary relation over $\allargs$: $\ar_2 \ibeats_p \ar_1$ iff, when $i$ is in the perspective $p$, $\ar_2$ turns $\ar_1$ into an invalid argument. Define $P$ as the set of all possible perspectives. Then, define $\ibeatse = \bigcup_{p \in P} \ibeats_p$, and $\ar_2 \nibeatse \ar_1$ iff $\exists p \in P \suchthat ¬(\ar_2 \ibeats_p \ar_1)$. We favor another presentation because it emphasizes that we consider that we have direct access to $\ibeatse$ and $\nibeatse$, rather than to $\ibeats_p$.}

\begin{remark}
	\label{rq:supportSimple}
	Whereas the two relations $(\ibeatse, \nibeatse)$ allow to capture $i$’s changes of mind about whether a given argument can undermine another argument, the simple support relation $\ileadsto$ adopted here does not permit to capture changes of mind about whether a given argument supports a given proposition. We assume that, in practice, when implementing our approach, propositions will be sufficiently simple and clear, so as to make it safe to assume that $i$ will not change her mind concerning support during the decision process. This is a point to which the analyst will have to pay attention when applying our approach.
If it appears, in real-life implementations, that this assumption is ill-advised, the framework will have to be extended by applying the approach used for $\ibeatse$ to the support relation (this would not raise any specific difficulty). For the time being, in the absence of empirical reasons to believe that the added generality is needed, we choose to use a single $\ileadsto$ relation for simplicity.
\end{remark}

\begin{example}[Ranking (cont.)]
	Consider a set of criteria $J$. Consider the argument $\ar_b = $ “Alternative $\alt_1$ ought to be ranked above $\alt_2$ because $\alt_1$ is better than $\alt_2$ on three criteria while $\alt_2$ is better than $\alt_1$ on only one criterion”, and $\ar_c = $ “It does not make sense to treat all criteria equally in this problem”. Then (depending on $i$’s disposition), it might hold that $\ar_c \ibeatse \ar_b$, and it might hold that $\ar_b \ileadsto t_{\alt_1 \succ \alt_2}$. Note that both may very well hold together.
\end{example}

\begin{definition}[Decision situation]
	We denote a \emph{decision situation} by the tuple $(T, \allargs, {\ileadsto},\allowbreak {\ibeatse}, {\nibeatse})$, with $T, \allargs, {\ileadsto}, {\ibeatse}, {\nibeatse}$ defined as above.
\end{definition}

The part of $i$’s argumentative disposition that remains stable as $i$ changes perspectives is of distinctive interest for decision analysis purposes. Indeed, recall that the emergence of new arguments may lead $i$ to switch perspective. The stable part of her argumentative disposition is therefore a stance that proves resistant to the emergence of new arguments and is, in this sense, argumentatively well-grounded from $i$'s point of view.

Let us therefore define the corresponding stable relations: $\ibeatsst$ is defined as $\ar_2 \ibeatsst \ar_1 ⇔ ¬(\ar_2 \nibeatse \ar_1)$. In plain words, $\ar_2 \ibeatsst \ar_1$ if and only if there is no perspective within which $\ar_2$ does not trump $\ar_1$, or equivalently, $\ar_2 \ibeatsst \ar_1$ if and only if $\ar_2$ trumps $\ar_1$ in all perspectives. Relatedly, $\ar_2 \nibeatsst \ar_1$ is defined as: $\ar_2 \nibeatsst \ar_1 ⇔ ¬(\ar_2 \ibeatse \ar_1)$. 
Hence, $\ar_2 \nibeatsst \ar_1$ indicates that $\ar_2$ never trumps $\ar_1$. This implies, but is not equivalent to, $¬ (\ar_2 \ibeatsst \ar_1)$. 

\begin{example}[Ranking (cont.)]
	Consider alternatives $\alt_1$ and $\alt_2$ such that $\alt_1$ Pareto-dominates $\alt_2$ on criteria $J$. Define $\ar_d$ as an argument that states that $\alt_1$ ought to be ranked above $\alt_2$ because of the Pareto-dominance situation considering criteria in $J$. Then, it might hold that $\ar_d \ileadsto t_{\alt_1 > \alt_2}$. Define $\ar_f$ as “this is an incorrect reasoning because an important aspect to be considered in the problem is fairness and $\alt_1$ is worse than $\alt_2$ in this respect”. Then it might be that $\ar_f \ibeatse \ar_d$ (assuming that $i$ indeed considers fairness as important and that $J$ does not include fairness).
 If $i$ later changes her mind about the importance of fairness, then it will not hold that $\ar_f \ibeatsst \ar_d$. 
\end{example}

This enables us to define a decisive argument as one that is never trumped by any argument in $\allargs$.
\begin{definition}[Decisive argument]
	\label{def:decisiveargument}
	Given a decision situation $(T, \allargs, {\ileadsto}, {\ibeatse},\allowbreak {\nibeatse})$, we say that an argument $\ar \in \allargs$ is \emph{decisive} iff $\forall \ar' \in \allargs$: $\ar' \nibeatsst \ar$.
\end{definition}
 
Notice that decisive arguments can be of very different sorts. Some decisive arguments will be very simple and straightforward arguments, which are so simple that they will be accepted by $i$ whatever the perspective. By contrast, some decisive arguments will be very elaborate ones, taking many aspects of the topic into account and anticipating all sorts of arguments that could trump them, and accordingly never trumped by any other argument.

\begin{example}[Weather forecast]
\label{ex:weather forecast}
Assume that individual $i$ holds that $t$ = “it will rain tomorrow” is supported by the argument $\ar_1$ = “one can expect that it will rain tomorrow because weather forecast predicts so”. (See \cref{fig:weather forecast}.) But imagine that $i$ also holds, at least from some perspective, that $\ar_2$ = “weather forecast is unreliable to infer what the weather will be like tomorrow because weather forecast is often wrong” is a counter-argument that trumps $\ar_1$. Imagine further that $i$ would accept that an argument $\ar_3$ = “although it is often wrong, weather forecast is reliable because it is more often right than wrong” trumps $\ar_2$. Imagine, finally, that no argument trumps $\ar_3$ from any perspective.

In such a case, for $i$, $\ar_1$ is not a decisive argument. However, one can elaborate a more complex argument $\ar$ = “weather forecast predicts that it will rain tomorrow. This may be an incorrect prediction, but weather forecast is more often right than wrong, thus its predictions constitute a sufficient basis to think that it will rain tomorrow”. Notice that $\ar$ includes the reasonings given by $\ar_1$ and $\ar_3$.
Because $\ar$ anticipates that $\ar_2$ could be envisaged to trump it, $\ar$ could be decisive in supporting $t$ (as assumed in \cref{fig:weather forecast}).
\end{example}
\begin{figure}
	\centering
	\begin{tikzpicture}
		\path node (eq) {weather f.\ predicts so ($\ar_1$)};
		\path (eq.east) node[anchor=west] (eql) {$\ileadsto$};
		\path (eql.east) node[anchor=west] (eqr) {rain tomorrow ($t$)};
		\path (eq.south) ++(0, -\BDNodeSep) node[anchor=north] (wages) {weather forecast is often wrong ($\ar_2$)};
		\path (wages) edge[/Beliefs/attack] node[right] {$\ibeatse$} (eq);
		\path (wages.south) ++(0, -\BDNodeSep) node[anchor=north, /Beliefs/decisive] (discrimination) {weather f.\ is more often right ($\ar_3$)};
		\path (discrimination) edge[/Beliefs/attack] node[right] {$\ibeatse$} (wages);
		
		\path (eqr.east) node[anchor=west] (complexl) {\reflectbox{$\ileadsto$}};
		\path (complexl.base east) node[anchor=base west, /Beliefs/decisive] (complex) {complex arg. ($\ar$)};
	\end{tikzpicture}
	\caption{Illustration for \cref{ex:weather forecast,ex:weather forecast2}. The symbol under $\ar_3$ and $\ar$ indicates a decisive argument.}
	\label{fig:weather forecast}
\end{figure}
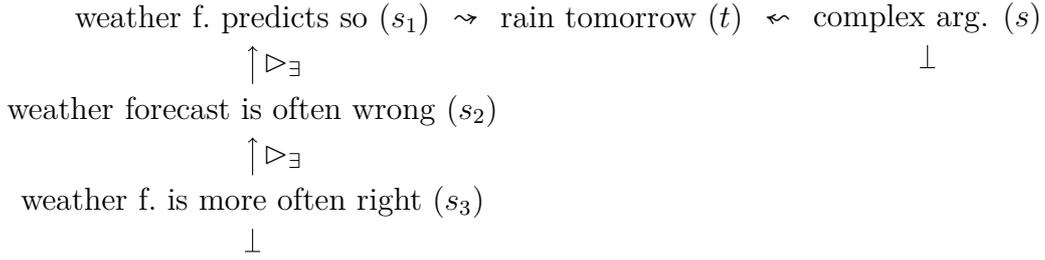

\subsection{Deliberated judgment}
Given a decision situation, we are now in a position to characterize $i$'s stance towards the propositions in $T$ once he has considered all the relevant arguments.
We say that a proposition is justifiable if it is supported by a decisive argument.
A proposition is said to be untenable when each argument supporting it is always trumped by a decisive argument. 

\begin{definition}[Justifiable and untenable propositions]
	\label{def:acceptreject}
	Given a decision situation $(T, \allargs,\allowbreak {\ileadsto},\allowbreak {\ibeatse}, {\nibeatse})$, a proposition $t$ is:
	\begin{itemize}
		\item \emph{justifiable} iff $\exists \ar \in \allargs \suchthat \ar \ileadsto t \text{ and } \forall \ar': \ar' \nibeatsst \ar$;
		\item \emph{untenable} iff $\forall \ar \in \allargs \suchthat \ar \ileadsto t: \exists \ar_c \suchthat \ar_c \ibeatsst \ar \text{ and } \forall \ar_{cc}: \ar_{cc} \nibeatsst \ar_c$.
	\end{itemize}
\end{definition}
Three important aspects of this definition are worth emphazising.

First, we use modal terms to name these notions: we talk about “justifi\emph{able}” rather than “justifi\emph{ed}” propositions. This is because, at a given point of time, individual $i$ might well fail to accept, as a matter of brute empirical fact, a proposition supported by a decisive argument, for example, because she does not know this argument. Similarly, she might accept an untenable proposition. All this is despite the fact that the decisive arguments referred to in the definitions of justifiable and untenable propositions are decisive \emph{according to $i$'s argumentative disposition} – that is, by $i$'s own standards.

Second, notice that, according to our definition, a proposition can’t be both justifiable and untenable, but it may be neither justifiable nor untenable. This may be the case if all the arguments supporting $t$ have counter-arguments, but at least one argument supporting $t$ has no decisive counter-argument.

Lastly, according to our definition, it is possible for a proposition $t$ to be justifiable and for not-$t$, or more generally for any proposition $t'$ in logical contradiction with $t$ or having empirical incompatibilities with $t$, to be justifiable too.
This specific definition allows to encompass situations in which there are intrinsically no more reason to accept $t$ than $t'$. This can happen even when it is clear and evident for $i$ that $t$ and $t'$ are incompatible, and even in situations where this incompatibility between $t$ and $t'$ is highlighted in some argument examined during the decision process.%
\footnote{Relatedly, notice that there is an important asymmetry between the notions of justifiable and untenable. Because $t$ and some incompatible $t'$ can both be justifiable, the fact that $t$ is justifiable does not necessarily imply that the fate of $t$ in $i$'s view is entirely settled by its justifiability. By contrast, there is no way an untenable proposition could come back into the scene.}
This is a consequence of our definition of the trump relation, and it reflects the important idea that, as a matter of fact, in some decision situations, even if one takes all the relevant arguments into account, it can happen that several, mutually incompatible propositions are equally supported. It is part of the very aim of decision-aid, in such situations, to unveil the fact that mutually incompatible propositions are equally supported.%
\footnote{Somewhat similar distinctions are discussed in formal argumentation theory about skeptical versus credulous justification \citep{prakken_combining_2006}. Delving into the details of a comparative analysis falls beyond the scope of the present article.}%

Decision situations allowing to classify unambiguously all propositions in the agenda into justifiable or untenable propositions are of distinctive interest. Let us term such decision situations “clear-cut”.

\begin{definition}[Clear-cut situation]
	A decision situation $(T, \allargs, {\ileadsto}, {\ibeatse}, {\nibeatse})$ is \emph{clear-cut} iff each proposition in $T$ is either justifiable or untenable.
\end{definition}

Given a decision situation, we can now define $i$'s \ac{DJ} as those propositions $t \in T$ that are justifiable. \begin{definition}[\ac{DJ} of $i$]
\label{def:justifiable}
	The \acl{DJ} corresponding to a decision situation $(T, \allargs, {\ileadsto}, {\ibeatse}, {\nibeatse})$ is:
	\begin{equation}
		\label{eq:DJ-u}
		T_i = \set{t \in T \suchthat t \text{ is justifiable}}.
	\end{equation}
\end{definition}

This notion of \ac{DJ}, as we define it, captures what we take to be an important idea underlying \citeauthor{goodman_fact_1983}’s \citeyearpar{goodman_fact_1983} and \citeauthor{rawls_theory_1999}’ \citeyearpar{rawls_theory_1999} concept of “reflective equilibrium”. This idea is that, if $i$ manages, through an iterative process of revision of her opinion through the integration of new elements or arguments, to reach an “equilibrium” which is stable with respect to the integration of new elements, then the opinion reached at “equilibrium” is of distinctive interest -- it captures $i$'s “well-considered” or “true” opinion in some sense.%
\footnote{That said, our notion of \ac{DJ} does not claim to reflect faithfully all the aspects of the notion of “reflective equilibrium” as used by the authors mentioned above. A thorough exploration of the links between our formal framework and these philosophical theories falls beyond the scope of the present article.}

Notice that the meaning of this definition depends on the interpretation given to $\allargs$ (see the beginning of \cref{sec:core}).
 In the idealistic interpretation, $i$'s \ac{DJ} is unique and fixed once and for all. In the pragmatic interpretation, $i$'s \ac{DJ} may evolve over time, as new arguments emerge.

\begin{example}[Weather forecast (cont.)]
\label{ex:weather forecast2}
To explain clearly this definition, it is useful to come back to our previous example (\cref{fig:weather forecast}) of individual $i$ who holds that “weather forecast is often wrong” ($\ar_2$) is a counter-argument that trumps “it will rain tomorrow because weather forecast predicts so” ($\ar_1$). We have seen that a more complex argument ($\ar$), including both “weather forecast predicts that it will rain tomorrow” and an additional sub-argument that trumps $\ar_2$, can turn out to be a decisive argument to support “it will rain tomorrow” ($t$). In such a case, $t$ belongs to $i$'s deliberated judgment, despite the fact that he might claim otherwise if not confronted with the complex argument above.
\end{example}

\begin{example}[Weather forecast (variant)]
\label{ex:weather forecast contr}
	In this example $T$ contains two propositions: $t_1$ is the proposition according to which it will rain tomorrow, and $t_2$ is the contrary proposition. Two corresponding arguments are $\ar_1$ and $\ar_2$ -- two weather forecasts from different sources that predict respectively that it will rain and that it will not. Assuming that $i$ attributes equal credibility to both sources and considers no other argument to be relevant, he might end up with both $t_1$ and $t_2$ in his deliberated judgment. This should not be interpreted as meaning that $i$ is incoherent, but simply as a situation where different propositions are equally justified for lack of means to tell them apart. Similarly, scientists can consider two contradictory hypotheses plausible, for lack of current knowledge; or someone may hold that two incompatible acts are equally (im)moral.
\end{example}

\section{Issues of empirical validation}
\label{sec:empirical}
The former section clarified definitions and explained the articulations between the key concepts of our framework, at a rather abstract level. Now we want to investigate how this framework can be confronted with empirical reality. For that purpose, we will examine how one can test a \emph{model} of the support and trump relations built by a decision analyst trying to capture the deliberated judgment of a decision-maker.

Let us define a model $\eta$ of a decision situation as a pair of relations $\mleadsto \subseteq \allargs × T$ and $\mbeats \subseteq \allargs × \allargs$. These relations are not necessarily an approximation of the real $\ileadsto, \ibeatse$ relations characterizing $i$. Indeed, the chief aim of the model is to know $i$’s \ac{DJ}, not to reflect in detail what $i$ thinks about all arguments, which would arguably not be achievable (we will come back to this important point below).

Define $T_\eta$ as the set of propositions that the model $\eta$ claims are supported:
\begin{equation}
	T_\eta = {\mleadsto}(\allargs) = \set{t \in T \suchthat \exists \ar \in \allargs \suchthat \ar \mleadsto t}.
\end{equation}

\begin{example}[Ranking (cont.)]
\label{ex:mavt}
\newcommand{\altc}{c}
\newcommand{\altl}{l}
\newcommand{\altp}{p}

We have already defined a set of alternatives $\allalts$, propositions $T$ representing possible comparisons of the alternatives, and criteria $J$. Consider further a set of criteria functions $(g_j)_{j \in J}$ evaluating all the alternatives $\alt \in \allalts$ using real numbers: $g_j: \allalts → \R$.

Imagine that $i$’s problem is to decide which kind of vegetable to grow in his backyard. Assume an analyst providing decision-aid to $i$ considers that the problem can be reduced to a ranking between three candidates: carrots, lettuce and pumpkins, denoted by $c, l, p \in \allalts$. The analyst believes that $i$ is ready to rank vegetables according to exactly two criteria. The analyst has obtained six real numbers $g_j(\alt)$, representing the performances of each alternative on each criteria, and believes that $i$ is ready to rank vegetables according to the sum of their performances on the two criteria, $v(\alt) = g_1(\alt) + g_2(\alt)$.

The analyst can now try to represent $i$'s attitude using a model $\eta = \left(\mleadsto, \mbeats\right)$ by producing sentences that explain to $i$ the “reasoning” underlying the definition of $v$. Assume the values given by $v$ position carrots as winners. The analyst could define an argument $\ar_{(\altc, \altl)}$ “carrots are a better choice than lettuce because carrots score $g_1(\altc)$ on criterion one, and $g_2(\altc)$ on criterion two, which gives it a value $v(\altc)$, whereas lettuce scores $g_1(\altl)$ on criterion one, and $g_2(\altl)$ on criterion two, which gives it an inferior value $v(\altl)$”. In the model of the analyst, this argument supports the proposition that carrots are ranked higher than lettuce: $\ar_{(\altc, \altl)} \mleadsto t_{\altc \succ \altl}$. The
model contains similar arguments in favor of other propositions $t \in T$ that are in agreement with the values given by $v$. In our example, the analyst furthermore believes that no counter-arguments are necessary and thus defines $\mbeats = \emptyset$.
\end{example}

\subsection{Validity and the problem of observability}
\label{sec:valObs}
Because the point of carving out $\eta$ is to capture i's \acl{DJ} $T_i$, we can define a valid model as one that correctly captures $T_i$.

\begin{definition}[Validity]
\label{valid}
	A model $\eta$ is valid iff $T_\eta=T_i$.
\end{definition}

How can the analyst determine if a given model $\eta$ is a valid one?

Let us assume that the only information that he can use for that purpose is the one he can get by querying $i$ – and is, in that sense, “observable” for him. \acp{DJ} are not observable in that sense. Indeed, $i$'s \ac{DJ} are defined in terms of $\nibeatsst$. But observing $\nibeatsst$ would require that $i$ takes successively all the possible perspectives she can have, which is unrealistic.%
\footnote{This would amount to assume that $i$ already knows all the arguments and can aggregate them successfully. If this were possible, $i$ would probably not need help from an analyst.}

In the remainder of this section, we explain how we handle this conundrum in two steps. First, \cref{sec:modelsEasy} introduces a provisional solution, by identifying conditions that guarantee the existence of a model allowing to identify $i$'s \ac{DJ} on the basis of what we will call an  “operational” validity criterion – that is, a criterion based on observable data. Then, \cref{sec:weakening} explores how these conditions can be weakened.

\subsection{Existence of a valid model and its conditions}
\label{sec:modelsEasy}
In this subsection, we introduce apparently reasonable conditions about the way $i$ reasons and about the decision situation. Our theorem will then guarantee that a model exists and captures correctly $i$’s \ac{DJ} if those conditions are satisfied on $\allargs$ and if the model satisfies a validity criterion that, as opposed to validity itself, can be directly checked on the basis of observable data (an ``operational validity'' criterion).

\subsubsection{Conditions}
A first condition about $\ibeatse$ mandates a certain form of stability. It assumes that $i$ possibly changes her mind about whether an argument $\ar'$ trumps another one only when there exists another argument that trumps $\ar'$.

\begin{condition}[Answerability]
	\label{def:justifiableStrong}
	A decision situation $(T, \allargs, {\ileadsto}, {\ibeatse}, {\nibeatse})$ satisfies \emph{Answerability} iff, for all pairs of arguments $(\ar, \ar')$:
	\begin{equation}
		\ar' \ibeatse \ar \text{ and } \ar' \nibeatse \ar ⇒ \exists \ar_c \suchthat \ar_c \ibeatse \ar'.
	\end{equation}
\end{condition}

Let us now turn to the second condition. It has to do with the way $i$ reasons. Imagine that $i$ finds himself in the following uneasy situation. He declares that $\ar_1$ is trumped by $\ar_2$. However, $i$ is also ready to declare that $\ar_2$ is in turn trumped by $\ar_3$, a decisive argument. In such a situation, it seems natural enough to assume that, if we carve out an argument $\ar$, playing the same argumentative role as $\ar_1$, but anticipating and defeating attempts to trump it using $\ar_2$, $i$ will endorse $\ar$.

This assumption is formalized by the condition \emph{Closed under reinstatement} below. To write it down, we first need to formalize, thanks to the following notion of \emph{replacement}, the idea that a set of arguments is at least as powerful as another argument, from the point of view of its argumentative role. We say a set of arguments $\args \subseteq \allargs$ replaces an argument $\ar \in \allargs$ whenever all the arguments trumped by $\ar$ are also trumped by some argument $\ar' \in \args$, and all the propositions supported by $\ar$ are also supported by some argument $\ar' \in \args$.%
\footnote{Note that the replacer may be more powerful than the argument it replaces, in the sense that it may trump arguments or support propositions than the replaced argument did not trump or support.}
\begin{definition}[Replacing arguments]
	\label{def:replacement}
	A set of arguments $\args \subseteq \allargs$ \emph{replaces} $\ar \in \allargs$ iff ${\ibeatse}(\ar) \subseteq {\ibeatse}(\args)$ and ${\ileadsto}(\ar) \subseteq {\ileadsto}(\args)$. 
	We say that $\ar'$ replaces $\ar$, with $\ar, \ar' \in \allargs$, to mean that $\{\ar'\}$ replaces $\ar$.
\end{definition}
	
\begin{condition}[Closed under reinstatement]
	\label{def:closed}
	A decision situation $(T, \allargs,\allowbreak {\ileadsto},\allowbreak {\ibeatse},\allowbreak {\nibeatse})$ is \emph{closed under reinstatement} iff, $\forall \ar_1 ≠ \ar_2 ≠ \ar_3 ≠ \ar_1 \in \allargs$ such that $\ar_3 \ibeatsst \ar_2 \ibeatse \ar_1$, with $\ar_3$ decisive:
	\begin{equation}
		\exists \ar \suchthat \ar \text{ replaces } \ar_1 \text{ and } {\ibeatseinv}(\ar) \subseteq \ibeatseinv(\ar_1) \setminus \{\ar_2\}.
	\end{equation}
\end{condition}
The condition mandates that, whenever some decisive argument always trumps $\ar_2$, which in turn trumps $\ar_1$, it is possible to replace $\ar_1$ by an argument that is no longer trumped by $\ar_2$ and is not trumped by any other argument than those trumping $\ar_1$.%
\footnote{Such a configuration of arguments, where $\ar_3$ trumps $\ar_2$ which in turns trumps $\ar_1$, recalls the notion of “strong defense” in argumentation theory \citep{baroni_principle-based_2007}. A further discussion of this issue falls beyond the scope of this paper.}

Finally, we introduce two conditions on the size of the relation $\ibeatse$.

Let us call a chain of length $k$ in $\ibeatse$ a finite sequence $\ar_i$ of arguments in $\allargs$, $1 ≤ i ≤ k$, such that $\ar_i \ibeatse \ar_{i + 1}$ for $1 ≤ i ≤ k - 1$. An infinite chain is an infinite sequence $\ar_i$ such that $\ar_i \ibeatse \ar_{i + 1}$ for all $i \in \N$.

\begin{condition}[Bounded width]
\label{def:B.br}	
A decision situation $(T, \allargs, {\ileadsto}, {\ibeatse}, {\nibeatse})$ has a \emph{bounded width} iff there is no argument that is trumped by an infinite number of counter-arguments.
\end{condition}

\begin{condition}[Bounded length]
\label{def:B.lg}
	A decision situation $(T, \allargs, {\ileadsto}, {\ibeatse}, {\nibeatse})$ has a \emph{bounded length} iff there is no infinite chain in $\ibeatse$. (Cycles in $\ibeatse$ are therefore excluded as well.)
\end{condition}

\subsubsection{Operational validity criterion}
Let us now define the following “operational” validity criterion for a model $\eta$ intended to capture $i$'s \ac{DJ}. We term it “operational” to emphasize that, as opposed to the definition of validity (\cref{valid}), it can be checked on the sole basis of observable data.

\begin{definition}[Operational validity criterion]
	\label{def:validity}
	A model $\eta$ of a decision situation is operationally valid iff, whenever $(\ar \mleadsto t)$, it holds that $[\ar \ileadsto t]$ and $[\forall \ar_c \in \allargs: (\ar_c \nibeatse \ar) ∨ (\exists \ar_{cc} \mbeats \ar_c ∧ \ar_{cc} \ibeatse \ar_c)]$, and whenever $t$ is not supported by $\eta$, $\forall \ar \ileadsto t: \exists \ar_c \mbeats \ar ∧ \ar_c \ibeatse \ar$.
\end{definition}

This criterion amounts to partially comparing, on the one hand, $i$'s argumentative disposition towards propositions and arguments and, on the other hand, $\eta$'s representations of $i$’s argumentative disposition.%
\footnote{This procedure could be considered as a persuasion dialogue \citep{prakken_models_2009}.}
More precisely, a model satisfies the operational validity criterion (for short: is operationally valid) iff:
\begin{enumerate}[label=({\roman*}), ref={\roman*}]
	\item arguments that, according to the model, support a proposition $t$ are indeed considered by $i$ to support $t$;
	\item whenever a model uses an argument $\ar$ to support a proposition, and that argument is trumped by a counter-argument $\ar_c$, the model can answer with a counter-counter-argument, using a counter-counter-argument that $i$ confirms indeed trumps the counter-argument $\ar_c$;
	\item whenever an argument $\ar$ supports a proposition that the model does not consider to be supported, the model is able to counter that argument using a counter-argument that $i$ confirms indeed trumps $\ar$.
\end{enumerate}

As required, this criterion is uniquely based on observable data. Indeed, recall that the only observable data that the analyst can use are the ones obtained by querying $i$ by asking her if a given argument $\ar_2$ trumps another argument $\ar_1$. If she replies that it does, this is enough to conclude that, according to her, $\ar_2 \ibeatse \ar_1$. Indeed, in such a case, we know that there is at least one perspective within which she thinks that $\ar_2$ trumps $\ar_1$: namely, the perspective that she currently has. Querying $i$ can thus provide the information needed to check if a model is operationaly valid.

\subsubsection{Theorem}
Because querying $i$ will not give enough information to know that $\ar_2 \ibeatsst \ar_1$ (if indeed $\ar_2 \ibeatsst \ar_1$), querying $i$ will never allow to directly claim that a model satisfies the definition of validity (\cref{valid}). What we need therefore is a means to ensure that an operationally valid model is a valid one. This is provided by the following theorem.
\begin{theorem}
	\label{thm:clearcutWeak}
	Assume a decision situation $(T, \allargs, {\ileadsto}, {\ibeatse}, {\nibeatse})$ is Closed under reinstatement, Answerable and has Bounded length and width. Then: i) the decision situation is clear-cut; ii) there exists an operationally valid model of that decision situation; iii) any operationally valid model $\eta$ satisfies $T_i = T_\eta$.
\end{theorem}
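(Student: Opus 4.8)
The plan is to reduce all three parts to one classification lemma about \emph{arguments}: for every $\ar \in \allargs$, either $\ar$ is \emph{grounded} (some decisive argument replaces $\ar$) or $\ar$ is \emph{defeated} (some decisive argument $\ar_c$ satisfies $\ar_c \ibeatsst \ar$). Granting this dichotomy, part (i) is immediate: for $t \in T$, if some supporter $\ar \ileadsto t$ is grounded, its decisive replacement $\ar^*$ has ${\ileadsto}(\ar) \subseteq {\ileadsto}(\ar^*)$, so $\ar^* \ileadsto t$ with $\ar^*$ decisive and $t$ is justifiable; if instead every supporter of $t$ is defeated, that is exactly untenability. Since every supporter is grounded or defeated, each $t$ is justifiable or untenable, i.e.\ the situation is clear-cut.

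The heart of the argument, and the step I expect to be the main obstacle, is the dichotomy itself, which I would prove by well-founded induction on the counter-argument relation (each trumper of $\ar$ counted as smaller than $\ar$); Bounded length makes this relation well-founded, and Bounded width ensures each $\ar$ has only finitely many trumpers, which matters below. The base case is an argument with no trumper, which is decisive, hence grounded. For the inductive step, assume every trumper of $\ar$ is classified. If some trumper $\ar'$ is grounded with decisive replacement $\ar'^*$, then from $\ar' \ibeatse \ar$ and ${\ibeatse}(\ar') \subseteq {\ibeatse}(\ar'^*)$ we get $\ar'^* \ibeatse \ar$; since $\ar'^*$ is decisive, the contrapositive of Answerability (a decisive argument that trumps in some perspective trumps in all) gives $\ar'^* \ibeatsst \ar$, so $\ar$ is defeated. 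If instead every trumper is defeated, I peel them off: each defeated trumper $\ar_2$ is always trumped by a decisive $\ar_3$, so Closed under reinstatement yields a replacement of $\ar$ that loses $\ar_2$ from its trumpers and gains none; since Bounded width leaves only finitely many trumpers to remove, finitely many such steps produce a decisive replacement of $\ar$, so $\ar$ is grounded. By the induction hypothesis these cases are exhaustive.

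For part (ii) I would build a concrete operationally valid model $\eta = (\mleadsto, \mbeats)$. For each justifiable $t$ choose a decisive supporter $\ar^*_t$ (it exists by part (i)) and set $\ar^*_t \mleadsto t$; for each untenable $t$ and each $\ar \ileadsto t$ choose the decisive $\ar_c$ with $\ar_c \ibeatsst \ar$ and set $\ar_c \mbeats \ar$. To verify operational validity: each model-supporter $\ar^*_t$ satisfies $\ar^*_t \ileadsto t$, and being decisive it is trumped by nothing, so $\ar_c \nibeatse \ar^*_t$ for every $\ar_c$ (using $\neg(\ar_c \ibeatse \ar^*_t) \Rightarrow \ar_c \nibeatse \ar^*_t$), making clause (ii) hold through its first disjunct; and for every unsupported, hence untenable, $t$, clause (iii) holds by the $\mbeats$-edges added, since $\ar_c \ibeatsst \ar$ implies $\ar_c \ibeatse \ar$.

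Finally, part (iii) follows from clear-cutness and the definition of decisiveness by two arguments by contradiction. Suppose $\eta$ is operationally valid and $\ar \mleadsto t$; if $t$ were untenable, its decisive always-trumper $\ar_c$ of $\ar$ would, by clause (ii), either satisfy $\ar_c \nibeatse \ar$ (contradicting $\ar_c \ibeatsst \ar$) or be trumped by some $\ar_{cc}$ (contradicting decisiveness of $\ar_c$); so $t$ is not untenable, hence justifiable by part (i), giving $T_\eta \subseteq T_i$. Conversely, if $t$ is justifiable via a decisive supporter $\ar^*$ but $t \notin T_\eta$, clause (iii) forces some $\ar_c \ibeatse \ar^*$, again contradicting decisiveness; so $T_i \subseteq T_\eta$, and therefore $T_\eta = T_i$.
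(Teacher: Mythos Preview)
Your architecture matches the paper's: both establish the dichotomy that every argument is either replaceable by a decisive argument or always trumped by one, and then derive all three conclusions from it. Your ``peeling off'' of defeated trumpers via Closed under reinstatement is exactly the paper's lemma $\argscldef \subseteq \argsrreplcldec$, your other inductive case (a grounded trumper propagates decisiveness through replacement plus Answerability) corresponds to the paper's use of $\ibeatse(\argscldec) = \ibeatsst(\argscldec)$, and your constructions for parts (ii) and (iii) are also essentially the paper's.

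There is, however, a genuine gap in your well-foundedness step, and to be fair the paper glosses over the same point. You declare trumpers of $\ar$ smaller than $\ar$ and assert that Bounded length makes this well-founded. But Bounded length, as literally stated, forbids only \emph{forward}-infinite chains $b_1 \ibeatse b_2 \ibeatse \cdots$, whereas well-foundedness of your order requires ruling out \emph{backward}-infinite chains $\cdots \ibeatse \ar_2 \ibeatse \ar_1 \ibeatse \ar_0$. These differ: take $\allargs = \{\ar_0, \ar_1, \ldots\}$ with only $\ar_{i+1} \ibeatse \ar_i$, set $\nibeatse$ to the complement of $\ibeatse$, and let $\ar_0 \ileadsto t$. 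All four conditions hold (the first two vacuously, each argument has exactly one trumper, and every forward chain from any $\ar_k$ terminates at $\ar_0$), yet no argument is decisive, so $t$ is neither justifiable nor untenable and your induction never reaches a base case. The paper in fact proves \cref{thm:clearcutStrong}, whose \cref{def:setB.lg} imposes a uniform bound $k$ on $Q$-chain length and therefore does yield the well-foundedness you need; its reduction of \cref{thm:clearcutWeak} to \cref{thm:clearcutStrong} by taking $\clargs = \allargs$ tacitly reads \cref{def:B.lg} in that stronger sense. Under that intended reading your proof is complete and essentially coincides with the paper's.
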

\Cref{thm:clearcutStrong} 
(in \cref{sec:weakening}) generalizes this theorem. It is proven in \cref{sec:proofs}.

\begin{example}[Budget reform]
\label{ex:budget}
Let us take a non trivial example that will be used to illustrate how \cref{thm:clearcutWeak} can be used and why we need to go beyond this first theorem. Imagine that $i$ is a political decision-maker. She wants to run for an election, and is elaborating her policy agenda. She has heard about  \citeauthor{meinard_measuring_2017}'s \citeyearpar{meinard_measuring_2017} (thereafter referred to as “M”) argument that, according to a popular survey, biodiversity should be ranked after retirement schemes and public transportation, but before relations with foreign countries, order and security, and culture and leisure in the expenses of the State. Assume that $i$ wants to make up her mind about the single proposition $t =$ “I should include in my agenda a reform to increase public spending on biodiversity conservation so as to rank biodiversity higher than relations with foreign countries in the State budget”.

She requests the help of a decision analyst. The latter starts by reviewing the literature to identify a set of arguments with which he will work. (The arguments are illustrated in \cref{fig:budget}.) He thereby identifies that proposition $t$ can be considered to be supported by $\ar =$ “M's finding (stated above) is based on a large scale survey and quantitative statistical analysis, and their protocol was designed to track the preferences that citizens express in popular votes. There are therefore scientific reasons to think that a policy package including the corresponding reform will gather support among voters.” Pursuing his exploration of the recent economic literature on environmental valuation methods, the analyst could identify only two counter-arguments to $\ar$:
\begin{itemize}
	\item $\ar_{c1}=$ “M's measure is extremely rough as compared to more classical economic valuations, such as contingent valuations and the like \citep{kontoleon_biodiversity_2007}, which makes it non credible as a guide for policy”;
	\item $\ar_{c2}=$ “M claim to value biodiversity \emph{per se}. The very meaning of such an endeavor is questionable because it is too abstract. More classical economic valuations are focused on concrete objects and projects, which is more promising”.
\end{itemize}

But he also found a counter-counter-argument to each of these counter-arguments:
\begin{itemize}
	\item $\ar_{c1c}=$ “Biodiversity is not the kind of thing about which people make decisions in their everyday life. Their preferences about it are accordingly likely to be rough. The exceedingly precise measurements provided by contingent valuations and the like are therefore more a weakness than a strength”;
	\item $\ar_{c2c}=$ “Abstract notions such as biodiversity are an important determining factor for many people when they make decisions. Eschewing to value them is ill-founded”.
\end{itemize}

Imagine further that the analyst has not found any argument liable to trump either $\ar_{c1c}$ or $\ar_{c2c}$.

\begin{figure}
	\centering
	\begin{tikzpicture}
		\path node (a) {will gather support ($\ar$)};
		\path (a.base east) node[anchor=base west] (al) {$\mleadsto$};
		\path (al.base east) node[anchor=base west] (ar) {increase spendings ($t$)};
		\path (a.south) ++(0, -\BDNodeSep) node[anchor=north east] (ac1) {rough measure ($\ar_{c1}$)};
		\path (a.south) ++(0, -\BDNodeSep) node[anchor=north west] (ac2) {abstract ($\ar_{c2}$)};
		\path (ac1) edge[/Beliefs/attack] node[right, xshift=0.8ex, yshift=-0.5ex] {$\mbeats$} (a);
		\path (ac2) edge[/Beliefs/attack] node[right] {$\mbeats$} (a);
		\path (ac1.south) ++(0, -\BDNodeSep) node[anchor=north, /Beliefs/decisive] (ac1c) {inherent ($\ar_{c1c}$)};
		\path (ac1c) edge[/Beliefs/attack] node[right] {$\mbeats$} (ac1);
		\path (ac2.south) ++(0, -\BDNodeSep) node[anchor=north, /Beliefs/decisive] (ac2c) {important ($\ar_{c2c}$)};
		\path (ac2c) edge[/Beliefs/attack] node[right] {$\mbeats$} (ac2);
		
	\end{tikzpicture}
	\caption{Illustration for \cref{ex:budget}. (Only the arguments used by the model $\eta$ are displayed.)}
	\label{fig:budget}
\end{figure}
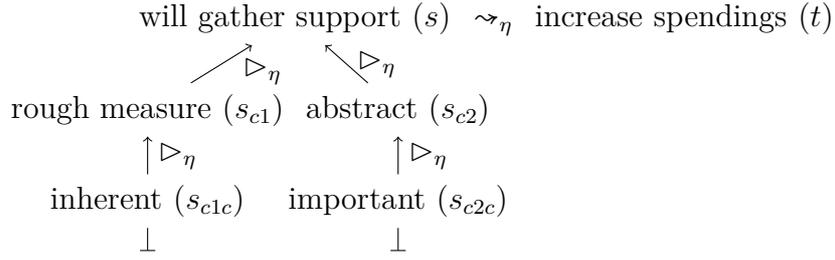

Define $\ar_{1, \text{reinstated}}$ as: “[\emph{content of $\ar$}]; this is a rough measure but [\emph{content of $\ar_{c1c}$}]”; similarly, define $\ar_{2, \text{reinstated}}$ as “[\emph{content of $\ar$}]; the very meaning could be questioned because it is highly abstract, but [\emph{content of $\ar_{c2c}$}]”; and define $\ar_\text{reinstated}$ as “[\emph{content of $\ar$}]; this is a rough measure but [\emph{content of $\ar_{c1c}$}]; the very meaning could be questioned because it is highly abstract, but [\emph{content of $\ar_{c2c}$}]”. Define $\args \subseteq \allargs$ as the set of argument comprising $\ar$, $\ar_{c1}$, $\ar_{c2}$, $\ar_{c1c}$, $\ar_{c2c}$, $\ar_{1, \text{reinstated}}$, $\ar_{2, \text{reinstated}}$ and $\ar_\text{reinstated}$.

Assume that the analyst is justified to think that $i$'s reasoning is such that $\allargs$ satisfies Closed under reinstatement, Answerability, Bounded length and Bounded width. Recall now that, in order to identify the propositions lying in $T_i$, the analyst must identify arguments supporting propositions in $T_i$, such that these arguments can resist counter-arguments from the whole of $\allargs$. In other words, the analyst must test the claims of the model not only against the counter-arguments in $\args$, but against the whole of $\allargs$, which the analyst ignores.

Imagine now that the analyst assumes that, even though $\args$ is a strict subset of $\allargs$, $\args$ is a good enough approximation of $\allargs$, in the sense that there is no argument in $\allargs \setminus \args$ that trumps any argument in $\args$ or that supports $t$.
Thanks to \cref{thm:clearcutWeak}, the analyst can then deduce that the situation is clear-cut and that there exists a valid model of the decision situation.

The next step for him is to carve out a model $\eta$ reproducing the relations between arguments that he found in the literature, and then to test whether his model is operationally valid using \cref{def:validity}. 
In order to validate $\eta$, he would first ask $i$ whether she agrees that $\ar$ supports $t$. If so, he then would check whether $i$ considers that $\ar_{c1}$ is a counter-argument to $\ar$, in which case the analyst would check that the counter-counter-argument that he envisaged, $\ar_{c1c}$, is considered by $i$ to trump $\ar_{c1}$. The analyst would then proceed in a similar way with the second chain of counter-arguments ($\ar_{c2}$ and $\ar_{c2c}$), and verify that, as $\eta$ hypothesizes, $i$ does not take any other argument in $\args$ to trump $\ar$. This would, eventually, allow him to conclude on the validity of the model $\eta$.
Should it prove operationally valid, the analyst could then conclude that $T_i=\{t\}$ (using \cref{thm:clearcutWeak} and $T_\eta=\{t\}$). 

But notice that this whole story only works because we assumed that arguments in $\allargs \setminus \args$ never trump any argument in $\args$. This assumption is clearly unrealistic: any slight reformulation of $\ar_{c1}$, for example, will most likely also trump $\ar$. This is not the only unrealistic assumption in our hypothetical scenario: it is also unlikely that the whole set $\allargs$ indeed satisfies Bounded length, for example. This condition requires an absence of cycle in the trump relation. While this may be considered to hold on $\args$, it is possible that some ambiguous or poorly phrased arguments in $\allargs$ would confuse $i$ in such a way that $i$ will declare, for example, that $\ar_1 \ibeatse \ar_2 \ibeatse \ar_3 \ibeatse \ar_1$ for some triple of such unclear arguments. Hence the need to go beyond \cref{thm:clearcutWeak}.
\end{example}

\Cref{thm:clearcutWeak} embodies an important step towards being able to confront models of deliberated judgment with empirical reality, by spelling out sufficient conditions upon which unrolling the procedures of refutation is not a pure waste of time and energy, because there is something to be found. It also illustrates the potential usefulness of the notion of operational validity. Indeed, since the point of the modeling endeavor in our context is to capture $T_i$, we know by virtue of iii) in \cref{thm:clearcutWeak} that, if the corresponding conditions are met, and if we have good reasons to believe that we have an operationally valid model, then we can admit that it captures $T_i$.

However, establishing this theorem cannot be more than just a first step. As illustrated in \cref{ex:budget}, the conditions above are quite heroic. One cannot realistically expect that real-life decision situations will fulfill these conditions. The most important issue is that we need a means to distinguish $\allargs$ from the restricted set of arguments with which the analyst works in practice. And we need means to make sure that the restricted set indeed “covers” the matter “sufficiently”, so as to escape the situation in which the analyst is locked in \cref{ex:budget}, where he finds himself condemned to make wildly unrealistic assumptions. The next subsection tackles this pivotal issue.

\subsection{Weakening of some conditions}
\label{sec:weakening}
To obtain the results we want, all we actually need is that it should be possible to define a subset of arguments $\clargs \subseteq \allargs$ that satisfies conditions akin to the ones defined above, and which are sufficient to cover the topic at hand.

Let us start by formalizing the requirement, for $\clargs$, to cover the topic at hand. What we want is that all the arguments needed for the decision-maker to make up her mind about the topic should be encapsulated in $\clargs$. This means that, if arguments from $\ar \in \allargs \setminus \clargs$ are brought to bear, it should be possible either to discard them or to show that they can be replaced by arguments in $\clargs$.

This is done thanks to the following formal definitions and condition.

\begin{definition}[Unnecessary argument]
	\label{def:unnecessary}
	Given a decision situation and a subset $\clargs \subseteq \allargs$ of arguments, we say that $\args \subseteq \allargs$ \emph{essentially replaces} $\ar \in \allargs$ iff $(\ibeatse(\ar) ∩ \clargs) \subseteq {\ibeatse}(\args)$ and ${\ileadsto}(\ar) \subseteq {\ileadsto}(\args)$. 
	
	Let $\argscldec = \clargs ∩ \overline{\ibeatse(\allargs)}$ denote the decisive arguments in $\clargs$.
	We say that an argument $\ar \in \allargs$ is \emph{resistant} iff it is not trumped by any argument in $\argscldec$. Let $\argsclres = \clargs ∩ \overline{\ibeatse(\argscldec)}$ denote the resistant arguments in $\clargs$.
	
	We say that an argument $\ar \in \allargs$ is \emph{unnecessary} iff $\ar$ is trumped by a resistant argument from $\clargs$ or $\ar$ is essentially replaceable by $\argsclres$.
	In formal terms: $\ar \in \ibeatse(\argsclres)$ or $[(\ibeatse(\ar) ∩ \clargs) \subseteq {\ibeatse}(\argsclres)$ and ${\ileadsto}(\ar) \subseteq {\ileadsto}(\argsclres)]$.
\end{definition}

\begin{condition}[Covering set of arguments]
\label{def:cover}
Given a decision situation and a set of arguments $\clargs \subseteq \allargs$, $\clargs$ is \emph{covering} iff all arguments $\ar \in \allargs \setminus \clargs$ are unnecessary.
\end{condition}

Let us now relax the conditions of \cref{thm:clearcutWeak} by formulating weaker requirements confined to $\clargs$. This adaptation is straightforward for \cref{def:justifiableStrong,def:closed}.

\begin{condition}[Set of arguments allowing answerability]
	\label{def:justUnstSet}
Given a decision situation and a subset $\clargs \subseteq \allargs$ of arguments, we say that the set $\clargs$ satisfies \emph{Answerability} iff, for all $\ar \in \allargs, \ar' \in \clargs$: $\ar' \ibeatse \ar \text{ and } \ar' \nibeatse \ar ⇒ \exists \ar_c \in \allargs \suchthat \ar_c \ibeatse \ar'$.
\end{condition}

\begin{condition}[Set of arguments closed under reinstatement]
	\label{def:closedSet}
	Given a decision situation $(T, \allargs, {\ileadsto}, {\ibeatse}, {\nibeatse})$ and a subset $\clargs \subseteq \allargs$ of arguments, we say that the set $\clargs$ is \emph{closed under reinstatement} iff, $\forall \ar_1, \ar_3 \in \clargs, \ar_1 ≠ \ar_3, \ar_3$ not trumping $\ar_1$, $\ar_3$ decisive:
	\begin{equation}
		\exists \ar \in \clargs \suchthat \ar \text{ replaces } \ar_1 \text{ and } \ibeatseinv(\ar) \subseteq \ibeatseinv(\ar_1) \setminus \ibeatsst(\ar_3).
	\end{equation}
\end{condition}
This condition is vacuous when there is no $\ar_2$ such that $\ar_3 \ibeatse \ar_2 \ibeatse \ar_1$: in that case, $\ar_1$ replaces itself.

Similarly, we can relax \cref{def:B.br} and apply it to a subset of arguments. When an argument has very numerous counter-arguments, one may think that their vast number might spring from some common reasoning that they share. For example, an argument might involve some real value as part of its reasoning, and be multiplied as infinitely many similar arguments of the same kind using tiny variations of that real value. If so, and if we know that we can convincingly rebut each of these counter-arguments, we might believe that only a small number of counter-counter-arguments will suffice to rebut the counter-arguments.

\begin{definition}[Defense]
	We say $\ar \in \allargs$ is \emph{$\clargs$-defended} iff all the arguments $\ar_c$ trumping $\ar$ are trumped by a decisive argument in $\clargs$, or formally, $\forall \ar_c \in \allargs \suchthat \ar_c \ibeatse \ar: (\exists \ar_{cc} \in \clargs \suchthat \ar_{cc} \ibeatsst \ar_c, \ar_{cc} \text{ decisive})$.
We say $\ar \in \allargs$ is \emph{$(j, \clargs)$-defended} iff there exists a set $\args \subseteq \clargs$ of arguments of cardinality at most $j$ such that $\ar$ is $\args$-defended (thus, if $j$ arguments from $\clargs$ suffice to defend $\ar$).
\end{definition}

\begin{condition}[Set of arguments with width bounded by $j$]
	\label{def:setB.b}
	Given a decision situation and a natural number $j$, a set of arguments $\clargs \subseteq \allargs$ has \emph{width bounded by $j$} iff, for each argument $\ar \in \clargs$, if $\ar$ is \emph{$\clargs$-defended}, then it is $(j, \clargs)$-defended.
\end{condition}
The condition is vacuously true when no argument in $\allargs$ is trumped by more than $j$ counter-arguments.

Our last condition relaxes \cref{def:B.lg}. We want to exclude \emph{some} of the long chains in $\allargs$. But we want to tolerate long chains, including cycles, among unclear arguments. Indeed, anecdotal evidence from ordinary argumentation situations suggests that in many (otherwise interesting) decision situations, cycles do appear in trump relations among arguments (for example, because arguments can use ambiguous terms). However, this does not necessarily prevent the situation from being modelizable in our sense. What we do need is to avoid some of the cycles or chains that involve “too many” arguments from $\clargs$, in a somewhat technical sense captured by the following condition.

\begin{condition}[Set of arguments with length bounded by $k$]
	\label{def:setB.lg}
	Given a decision situation, a natural number $k$, and a set of arguments $\clargs$, define a binary relation $Q$ over $\clargs$ as $\ar_2 Q \ar_1$ iff $\ar_2 \ibeatse \ar_1$ or $\ar_2 \ibeatse \ar \ibeatse \ar_1$ for some $\ar \in \allargs$, thus, $Q = (\ibeatse ∪ (\ibeatse \circ \ibeatse)) ∩ (\clargs × \clargs)$. Let $Q^1 = Q$ and $Q^{k+1} = Q^k \circ Q$ for any natural number $k$. The set $\clargs$ has length bounded by $k$ iff $\nexists \ar_2, \ar_1 \in \clargs \suchthat \ar_2 Q^{k+1} \ar_1$, thus, iff it is impossible to reach an argument from $\clargs$, starting from an argument from $\clargs$, following $Q$ more than $k$ times.
\end{condition}

This condition tolerates cycles\footnote{Cycles in our sense have to be distinguished from cycles involving an attack relation as defined in formal argumentation theory. We do not deny that cycles of attacks in the formal argumentation sense often happen, and \cref{def:setB.lg} does not exclude cycles understood in that sense: these cycles are generally not cycles in “trump” relations. We consider that an argument $\ar_2$ trumps another one only when $i$ considers that the first one is strong enough to render the second one ineffective.
This definition relies on an asymmetry, $\ar_2$ being, in a sense, “favored over” $\ar_1$. Our trump relation is therefore somewhat analogical to a strict preference relation, for which an assumption of acyclicity is commonplace in the literature.} in $\ibeatse$ that involve only arguments picked outside the chosen set $\clargs$. It only forbids a subset of the situations where a cycle (or a too long chain) is built that involve arguments from $\clargs$. For example, it excludes a situation where $\ar_2 \ibeatse \ar \ibeatse \ar_1 \ibeatse \ar_2$ for some $\ar_1, \ar_2 \in \clargs$ and $\ar \notin \clargs$.\footnote{Readers used to decision theoretic axiomatizations might find this condition odd, since axioms usually mandate conditions considered more “basic”, such as transitivity and irreflexivity, and derive from them the conclusion that cycles are forbidden. This strategy does not work for our setting (or is not applicable in a simple way), because “basic” conditions such as transitivity would be unreasonable to impose here. For example, given $\ar_3 \ibeatse \ar_2$ and $\ar_2 \ibeatse \ar_1$, it is easy to think about situations where $i$ would consider that $\ar_3 \nibeatsst \ar_1$, and to think about situations where $i$ would consider that $\ar_3 \ibeatse \ar_1$. Neither anti-transitivity nor transitivity can thus be reasonably imposed (and our current condition avoids such requirements). Studying which conditions exactly are necessary to ban cycles (or make them innocuous) in our setting would be interesting, but it does not seem crucial at this stage. Indeed, in concrete settings we consider that cycles involving arguments from $\clargs$ are unlikely to occur. (This claim should be backed up by empirical studies.)}

Thanks to \cref{def:cover,def:justUnstSet,def:closedSet,def:setB.b,def:setB.lg}, we are now in a position to define our set of arguments of interest.

\begin{definition}[CAC arguments]
	Given a decision situation and a set $\clargs \subseteq \allargs$, we say that $\clargs$ is \emph{clear} and \emph{covering}, or CAC, iff it is Closed under reinstatement and Answerable, and has width bounded by some number $j$ and length bounded by some number $k$, and is such that all arguments $\ar \in \allargs \setminus \clargs$ are unnecessary.
\end{definition}

Following the same rationale, we can define an operational criterion echoing \cref{def:validity}.

\begin{definition}[$\clargs$-operational validity]
	Given a decision situation and a set $\clargs \subseteq \allargs$, we define a model $\eta$ as \emph{$\clargs$-operationally valid} iff for all $(\ar \mleadsto t)$, $\ar \in \allargs$, we have $[\ar \ileadsto t]$ and $[\forall \ar_c \in \clargs: (\ar_c \nibeatse \ar) ∨ (\exists \ar_{cc} \in \allargs \suchthat \ar_{cc} \mbeats \ar_c ∧ \ar_{cc} \ibeatse \ar_c)]$, and when $t$ is not supported by $\eta$, $\forall \ar \in \clargs \suchthat \ar \ileadsto t: (\exists \ar_c \in \allargs \suchthat \ar_c \mbeats \ar ∧ \ar_c \ibeatse \ar)$.
\end{definition}

A theorem echoing \cref{thm:clearcutWeak} can then be proved.

\begin{theorem}
	\label{thm:clearcutStrong}
	Given a decision situation $(T, \allargs, {\ileadsto}, {\ibeatse}, {\nibeatse})$, given $\clargs \subseteq \allargs$, if $\clargs$ is CAC, then i) the decision situation is clear-cut; ii) there exists an $\clargs$-operationally valid model $\eta$; iii) any $\clargs$-operationally valid model $\eta$ satisfies $T_i = T_\eta$.
\end{theorem}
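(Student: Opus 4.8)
The plan is to mirror the architecture of the proof of \cref{thm:clearcutWeak}, but to localize every step to $\clargs$ and to absorb the arguments of $\allargs \setminus \clargs$ through the covering condition (\cref{def:cover}). The engine of the whole argument will be one structural lemma: \emph{for every $\ar_1 \in \clargs$, either \textnormal{(a)} $\ar_1$ is replaced (\cref{def:replacement}) by some decisive argument of $\argscldec$, or \textnormal{(b)} $\ar_1$ is stably trumped ($\ibeatsst$) by some decisive argument of $\argscldec$}. Two elementary observations will be used throughout. First, by the standing assumption $¬(\ar' \ibeatse \ar) ⇒ \ar' \nibeatse \ar$, decisive (\cref{def:decisiveargument}) means exactly lying outside ${\ibeatse}(\allargs)$. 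Second, the contrapositive of localized Answerability (\cref{def:justUnstSet}) shows that a decisive $\ar' \in \clargs$ that trumps $\ar$ at all must trump it stably; combined with the fact that a replacement preserves ${\ibeatse}(\cdot)$, this lets me upgrade a decisive replacement of a trumper into a decisive stable trumper.

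I would prove the lemma by well-founded induction on the relation $Q$ of \cref{def:setB.lg} restricted to $\clargs$, which is well-founded because length bounded by $k$ forbids any $Q^{k+1}$-chain. Fix $\ar_1 \in \clargs$ and examine its trumpers. A trumper $\ar_2 \in \clargs$ satisfies $\ar_2 Q \ar_1$, so the induction hypothesis applies: if $\ar_2$ is in case (a), its decisive replacement trumps $\ar_1$, hence stably trumps it, putting $\ar_1$ in case (b); if $\ar_2$ is in case (b), it is defeated by a decisive argument of $\clargs$. A trumper $\ar_2 \notin \clargs$ is unnecessary (\cref{def:unnecessary}): in the case $\ar_2 \in {\ibeatse}(\argsclres)$, a resistant $\ar_r \in \argsclres$ with $\ar_r \ibeatse \ar_2 \ibeatse \ar_1$ satisfies $\ar_r Q \ar_1$ through the two-step clause of $Q$, so the induction hypothesis applies to $\ar_r$; resistance ($\argsclres \subseteq \overline{{\ibeatse}(\argscldec)}$) excludes case (b) for $\ar_r$, so $\ar_r$ is replaced by a decisive argument that then stably trumps $\ar_2$ and defeats it; in the essentially-replaceable case, $\ar_1 \in ({\ibeatse}(\ar_2) ∩ \clargs) \subseteq {\ibeatse}(\argsclres)$ produces a resistant trumper of $\ar_1$ in $\clargs$, which by the same reasoning yields a decisive argument stably trumping $\ar_1$, i.e.\ case (b). Thus every trumper either forces case (b) or is defeated by a decisive argument of $\clargs$; if none forces (b), $\ar_1$ is $\clargs$-defended, so width bounded by $j$ (\cref{def:setB.b}) supplies at most $j$ decisive defenders, and applying localized Closed under reinstatement (\cref{def:closedSet}) once per defender strips away all trumpers (each step removes ${\ibeatsst}(\ar_3)$ and creates none, and replacement composes), leaving a decisive argument of $\argscldec$ replacing $\ar_1$, i.e.\ case (a). The side conditions of \cref{def:closedSet} (distinctness, and $\ar_3$ not trumping the current argument) I discharge by noting that a defender trumping $\ar_1$ would, by Answerability, already stably trump it and land us in case (b).

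Part (i) follows by splitting on whether $t \in {\ileadsto}(\argsclres)$. If yes, a resistant supporter of $t$ is in case (a), so a decisive argument supports $t$ and $t$ is justifiable. If no, every supporter $\ar \ileadsto t$ is stably trumped by a decisive argument of $\clargs$: a supporter in $\argsclres$ is excluded (it would place $t$ in ${\ileadsto}(\argsclres)$); a supporter in $\clargs \setminus \argsclres$ is by definition trumped by a decisive argument of $\argscldec$, hence stably; an outside supporter is unnecessary, its essentially-replaceable case excluded as above and its ${\ibeatse}(\argsclres)$ case handled by upgrading a resistant trumper via the lemma. Hence $t$ is untenable, and since no proposition is both justifiable and untenable (\cref{def:acceptreject}), the situation is clear-cut. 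I record the byproduct that in both branches the witnessing decisive argument lies in $\argscldec \subseteq \clargs$; this is exactly what makes the $\clargs$-quantified operational clauses bite in part (iii). For (ii) I build $\eta$ directly: set $\ar_t^* \mleadsto t$ with $\ar_t^* \in \argscldec$ the decisive supporter above for each justifiable $t$, and let $\mbeats = \set{(\ar_c, \ar) \suchthat \ar \in \clargs,\ \ar \ileadsto t \text{ for some untenable } t,\ \ar_c \text{ decisive},\ \ar_c \ibeatsst \ar}$. The second conjunct of the support clause holds vacuously since each $\ar_t^*$ is decisive hence untrumped, and the unsupported clause holds since untenability supplies the required decisive trumper; clearly $T_\eta = T_i$. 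For (iii), given any $\clargs$-operationally valid $\eta$: if $t \in T_\eta$ via $\ar \mleadsto t$ were untenable, the byproduct yields decisive $\ar_c^* \in \clargs$ with $\ar_c^* \ibeatsst \ar$, and feeding $\ar_c = \ar_c^*$ into the support clause fails its first disjunct (stable trump) and demands a trumper of the decisive $\ar_c^*$, a contradiction, so $T_\eta \subseteq T_i$; conversely if $t \in T_i$ were absent from $T_\eta$, the decisive supporter $\ar^* \in \clargs$ fed into the unsupported clause would have to be trumped, again contradicting decisiveness, so $T_i \subseteq T_\eta$.

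The main obstacle is the lemma, and within it the treatment of trumpers lying outside $\clargs$. The scheme only closes because the two-step clause of $Q$ places the resistant arguments reached through covering \emph{strictly below} $\ar_1$ in the well-founded order, and because resistance precisely forbids the untenable alternative for those arguments and so forces them into the reinstatable case, where replacement transports their support and trump images back to $\ar_1$. The other delicate point is coordinating the iterated reinstatement with the width bound: verifying that at most $j$ decisive defenders jointly cover all trumpers of $\ar_1$ and that the successive replacements neither introduce new trumpers nor collide with the defenders, so that the process terminates in a genuinely decisive member of $\argscldec$.
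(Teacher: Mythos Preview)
Your proof is correct and follows essentially the same architecture as the paper's. Both hinge on the same engine: use bounded length of $Q$ to show that every resistant argument of $\clargs$ is (finitely) defended, then iterate localized Closed under reinstatement (bounded width securing finiteness) to promote defended arguments to decisive replacements, and absorb $\allargs \setminus \clargs$ through covering. The organizational differences are that the paper factors the argument through an intermediate notion of \emph{efficiency} of $\clargs$ and proves the two promotions ($\argsclres \subseteq \argscldef$ and $\argscldef \subseteq \argsrreplcldec$) as separate lemmas, the first by the contradiction ``every element of the bad set has a $Q$-predecessor in the bad set'', whereas you fold both into a single dichotomy lemma proved by direct well-founded induction on $Q$. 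Your packaging is slightly more economical; the paper's buys a standalone characterization (its \cref{thm:clearSubsetEquivEfficient}) that efficiency of some subset is equivalent to existence of a CAC subset.
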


This theorem is a strengthened version of \cref{thm:clearcutWeak} since it produces the same results based i) on the conditions encapsulated in the definition of CAC arguments, and ii) on $\clargs$-operational validity. Those conditions are implied by the ones assumed by \cref{thm:clearcutWeak}. Indeed, when the conditions of \cref{thm:clearcutWeak} hold, taking $\clargs = \allargs$ satisfies the conditions of \cref{thm:clearcutStrong}.%
\footnote{\Cref{thm:clearcutStrong} has an interesting corollary which permits to view our proposal as providing useful means to take account of the fact that knowledge evolves. In some cases it might be important, for example for efficiency reasons in contexts of limited resources, to investigate if a decision-aid provided before some discovery of new knowledge is still valid after the discovery. Take a decison-aid which has been provided using a set of argument $\clargs$ which is CAC with respect to the set of known arguments before the discovery $\allargs_\text{before}$ and using a $\clargs$-operationally valid model $\eta$. \Cref{thm:clearcutStrong} shows that, if we can prove that $\clargs$ is CAC with respect to the set of all the arguments $\allargs_\text{after}$ supplemented thanks to the new discovery, then there is no need to check the validity of $\eta$ again. We thank an anonymous reviewer for this observation.}

\section{Significance of the deliberated judgment framework for decision theory and the practice of decision analysis}
\label{sec:discussion}
\Cref{sec:core} displayed the conceptual core of our framework and \cref{sec:empirical} explained how this framework can be confronted to empirical reality. The present section reflects on the meaning, promises and limits of our approach. We start by pondering on how the various conditions spelled out in \cref{sec:empirical} can be interpreted (\cref{sec:meaning}). We then take a broader view to discuss how our framework relates to the larger literature in decision science (\cref{sec:pers}).

\subsection{The meaning of our conditions}
\label{sec:meaning}
In order to understand the precise meaning of the conditions of \cref{thm:clearcutWeak} and, more importantly, of \cref{thm:clearcutStrong}, an almost trivial but nonetheless very important first step is to spell out what it means if these conditions are \emph{not} fulfilled.

We already stressed that the conditions of \cref{thm:clearcutWeak} are certainly too strong to be fulfilled. The conditions of \cref{thm:clearcutStrong} are, by construction, much weaker. But still, there certainly are situations where they are not fulfilled. In such cases, we do not claim that decision analysis is impossible. Neither is our general framework, as presented in \cref{sec:core}, rendered bogus. The sole implication is that our approach to operational empirical validation cannot be implemented. This does not prevent, for example, the analyst from trying to identify directly decisive arguments, and this does not render irrelevant a decision analysis based on decisive arguments. Neither does this prevent completely other approaches to decision analysis to be implemented. The only implication is that a full-fledged implementation of our approach, including operational empirical validation, is not guaranteed to be possible in such situations. It is no part of our claim that our approach can be applied all the time and provides an all-encompassing framework liable to overcome all other approaches to decision analysis. Our approach has a specific domain of application.

Beyond these simple, negative comments, how are our conditions to be understood? In general terms, these various conditions can be interpreted in three different ways:
\begin{enumerate}[label=({\roman*})]
	\item \label{inter:axioms} as axioms capturing minimal properties concerning arguments and the way $i$ reasons,
	\item \label{inter:empir} as empirical hypotheses,
	\item \label{inter:rules} as rules governing the decision process (rules that $i$ can commit to abide by, or can consider to be well-founded safeguards for the proper unfolding of the process).
\end{enumerate}

\begin{example}[Budget reform (cont.)]
\label{ex:budgetInterpr}
We can now improve \cref{ex:budget} by relaxing the assumptions it contains. One can envisage in turn the three possibilities spelled out above.

In interpretation \ref{inter:axioms}, instead of assuming that $i$ always reasons in such a way that $\allargs$ in its entirety satisfies the conditions of \cref{thm:clearcutWeak}, we only assume that the set of argument $\args = \{\ar$, $\ar_{c1}$, $\ar_{c2}$, $\ar_{c1c}$, $\ar_{c2c}$, $\ar_{1, \text{reinstated}}$, $\ar_{2, \text{reinstated}}$, $\ar_\text{reinstated}\}$ is CAC.

In interpretation \ref{inter:empir}, we have to take advantage of empirical data to claim that the above set is CAC. Imagine, for example, that we have been able to show that the overwhelming majority of people does reason with respect to the arguments in this set in such a way that it can be considered CAC. This would provide strong empirical support to admit that this set can be considered CAC for the purpose of the decision process at issue (assuming the pragmatic interpretation of $\allargs$). In the present article, we leave aside the important difficulties that such empirical concrete applications would face.

In interpretation \ref{inter:rules}, the analyst would start by explaining to $i$ the content of the requirements encapsulated in the definition of a CAC set of arguments and ask her if she is willing to commit herself to reason in such a way as to fullfill these requirements when thinking about the arguments to be discussed in the process. For example, for the Answerability of the set of arguments (\cref{def:justUnstSet}), the analyst would ask $i$ if she would accept to commit not to change her mind depending on her mood or any other non-argumentative factor. Notice that $i$ might figure at some point that it was not a good idea after all to commit to these various things, and in such a case the decision analysis process would fail.
\end{example}

Some of the conditions of our theorems are arguably more congenial to a given interpretation. For example, it seems natural enough to interpret \cref{def:closed} as a rationality requirement of the kind that it makes sense to use as an axiom (interpretation \ref{inter:axioms}). By contrast, \cref{def:justifiableStrong} is the kind of condition that can easily be translated in the form of rules than decision-makers can be asked to abide by when they engage in a decision process (interpretation \ref{inter:rules}). 
By construction, \cref{def:justUnstSet,def:closedSet}
are weakened versions of the above stronger conditions. They accordingly inherit the preferred interpretation suggested above.
\Cref{def:setB.b,def:setB.lg} can easily be seen as empirical hypotheses (interpretation \ref{inter:empir}).

However, although it is tempting to draw such connections between specific conditions and specific interpretations, at a more abstract level all the conditions above can be interpreted in all three interpretations. The different conditions can even be interpreted differently in the context of different implementations. In the present, largely theoretical work, we want to leave all these possibilities open. Future, more applied works, should assess if and when these different interpretations can be used, in particular by elaborating and implementing the convenient empirical validation protocols in interpretation \ref{inter:empir} and the convenient participatory procedures in interpretation \ref{inter:rules}.

\subsection{The deliberated judgment framework in perspective}
\label{sec:pers}
Now that the meaning of the conditions of our theorems is clarified, we are in a firmer position to discuss the nature of our contribution to the literature.

The central, distinctive concept of our approach is the one of deliberated judgments of an individual. Deliberated judgments are the propositions that the individual herself considers based on decisive arguments, on due consideration. This formulation highlights the two key features of the concept.

	The first key feature is that deliberated judgments are the result of a careful examination of arguments and counter-arguments. This echoes the approach to the notion of rationality developed most prominently by \citet{habermas_theorie_1981}. In this approach, actions, attitudes or utterances can be termed “rational” so long as the actor(s) performing or having them can account for them, explain them and use arguments and counter-arguments to withstand criticisms that other people could raise against them. Variants of this vision of rationality play a key role in other prominent philosophical frameworks, such as \citeauthor{scanlon_what_2000}’s \citeyearpar{scanlon_what_2000} and \citeauthor{sen_idea_2009}’s \citeyearpar{sen_idea_2009}. Having in mind this approach to rationality, in the remainder of this discussion, we will therefore simply talk about “rationality” when referring to this first idea underlying our framework.

	The second key feature is that deliberated judgments are nevertheless the individual's own judgments, in the sense that they do not reflect the application of any exogenous criterion. This second idea can also be nicknamed, for brevity's stake, by simply talking about “non-paternalism”.

Our approach, when applied in a decision analysis perspective, requires admitting the soundness of these two normative notions of rationality and non-paternalism.

Our approach however also has a strong descriptive dimension, which is a direct implication of the very meaning of non-paternalism. Though we are interested in deliberated judgments rather than in the ``shallow'' preferences that individual spontaneously express, still the deliberated judgments that we are interested in are the ones of real, empirical individuals that are not constrained by our framework to adhere to a specific set of exogeneous stances. These descriptive aspects feed a normative approach that accordingly owes its normative credentials both to its normative foundations and to its reference to empirical reality.

Due to this double anchorage in normative and descriptive aspects, our approach opens avenues to overcome perennial difficulties facing decision theory concerning its descriptive vs. normative status. Indeed, our framework sets the stage for decision-aiding practices that could have a crucial strength as compared with more standard approaches, by including rigorous tests of whether individuals endorse or not various arguments and argumentative lines, thereby avoiding both actively advocating them (a purely normative approach) and leaving the individual in the ignorance of their existence (a purely descriptive approach). Decision analyses based on deliberated judgments thereby provide compelling reasons for the aided individual to think that the decisions he makes once he has been aided are better than the one he would have made otherwise. Such reasons are liable to play a key role in strengthening the legitimacy and validity of decision analysis -- two requirements largely discussed in the literature \citep{landry_model_1983, landry_model_1996}.

In order to illustrate this idea, it is useful to compare our framework to more classical approches, such as utility theory. Proponents of utility theory could claim that utility functions provide arguments that individuals will consider convincing \citep{savage_foundations_1972, morgenstern_reflections_1979, raiffa_back_1985}, and that therefore our approach will converge towards utility theory. However, the convincing power of utility-based arguments is debatable \citep{ellsberg_risk_1961, allais_so-called_1979}. Psychologists have tried to test it experimentally \citep{slovic_who_1974, maccrimmon_utility_1979}. But such tests can hardly be considered conclusive: the meaning of their results depends on how arguments have been presented to the individuals and on whether counter-arguments have been presented, as \citet{slovic_who_1974} themselves point out. Such a systematic confrontation with counter-arguments is precisely what our proposed framework allows to implement. 

The formal framework presented in this article will however only live up to its promises if empirical applications are developed. Researchers in artificial intelligence \citep{labreuche_general_2011} and persuasion \citep{carenini_generating_2006} have produced ways of “translating” formal Multi-Attribute Value Theory models into textual arguments, that could possibly provide promising tools to develop such applications.

\section*{Acknowledgements}
{
\setlength{\emergencystretch}{.5em}
We thank Denis Bouyssou, Cyril Hédoin, Jean-Sébastien Gharbi, André Lapied, Bernard Roy, Stéphane Deparis and two anonymous reviewers for very helpful comments.

}

\bibliography{philo-eco}

\appendix
\section{Proofs, and additional explanatory results}
\label{sec:proofs}
Our main goal in this section is to prove \cref{thm:clearcutStrong}. We do this by first proving that if a set $\clargs$ is CAC, then it includes enough decisive arguments to settle the issue (we will call such a set $\args \subseteq \allargs$ \emph{efficient}). This requires a few intermediate lemmas. Efficiency will bring a number of consequences of interest to us, among which \cref{thm:clearcutStrong}. As a second goal, we want to give some further results that help understand the relationship between the notions of clear-cut, validity and operational validity, existence of a CAC set of arguments, and efficiency.

Let us start with the formal definition of efficiency.
\begin{definition}[Efficiency]
	Given a decision situation $(T, \allargs, {\ileadsto}, {\ibeatse}, {\nibeatse})$ and $\args \subseteq \allargs$, $\args$ is \emph{efficient} iff
	$T_i = \ileadsto(S ∩ \overline{\ibeatse(\allargs)})$, and $t \notin T_i ⇔ \ileadstoinv(t) \subseteq \ibeatsst(S ∩ \overline{\ibeatse(\allargs)})$.
\end{definition}

Recall that $\overline{\ibeatse(\allargs)}$ designates the arguments not trumped by any argument, thus, the decisive arguments, and hence, $\ibeatsst(\args ∩ \overline{\ibeatse(\allargs)})$ designates the arguments always trumped by some decisive argument in $\args$.

In all this section, we assume we are given a decision situation $(T, \allargs, \ileadsto,\allowbreak \ibeatse,\allowbreak \nibeatse)$ and a subset of arguments $\clargs \subseteq \allargs$ (except in \cref{thm:clearSubsetEquivEfficient}).

Our strategy for proving that CAC implies efficiency, roughly speaking, involves excluding “undecided” situations from $\clargs$. For example, we want to show that it is impossible that an argument has no decisive argument trumping it in $\clargs$, but also fails to be defended in $\clargs$. We will do this by progressively promoting or degrading arguments, e.g., show that, in $\clargs$, if an argument is resistant (has no argument that decisively trumps it), then it must also be defended, and if it is defended, it must be replaceable by decisive arguments.

Define $\argscldec = \clargs ∩ \overline{\ibeatse(\allargs)}$ as the decisive arguments from $\clargs$.

Define an argument $\ar$ as finitely defended iff some finite set of arguments from $\argscldec$ defends it, thus, iff $\exists \args \subseteq \argscldec$ such that ${\ibeatseinv}(\ar) \subseteq {\ibeatse}(\args)$, $\args$ finite. Define $\argscldef$ as the arguments from $\clargs$ that are finitely defended.

Define $\argsrreplcldec \subseteq \allargs$ as the arguments that are replaceable by $\argscldec$. Recall that $\args$ replaces $\ar$ iff ${\ibeatse}(\ar) \subseteq {\ibeatse}(\args)$ and ${\ileadsto}(\ar) \subseteq {\ileadsto}(\args)$.

Define $\argsclres = \clargs ∩ \overline{\ibeatse(\argscldec)}$ as the resistant arguments from $\clargs$, namely, those not trumped by any argument from $\argscldec$.

Define $\argsreplclres \subseteq \allargs$ as the arguments that are essentially replaceable by $\argsclres$. Recall that $\args$ essentially replaces $\ar$ iff $(\ibeatse(\ar) ∩ \clargs) \subseteq {\ibeatse}(\args)$ and ${\ileadsto}(\ar) \subseteq {\ileadsto}(\args)$.

Similarly, $\argsreplcldec$ are the arguments essentially replaceable by $\argscldec$.

\begin{lemma}[$\argscldef \subseteq \argsrreplcldec$]
	If $\clargs$ is Closed under reinstatement and Answerable, then the arguments from $\clargs$ that are finitely defended are replaceable by decisive arguments from $\clargs$; formally: $\argscldef \subseteq \argsrreplcldec$. 
\end{lemma}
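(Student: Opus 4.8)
The plan is to prove something marginally stronger: for each $\ar \in \argscldef$ I will exhibit a \emph{single} decisive argument of $\clargs$ that replaces $\ar$. Since $\ibeatse$ and $\ileadsto$ are monotone in their set argument, a single $\ar^\ast \in \argscldec$ with $\ibeatse(\ar) \subseteq \ibeatse(\ar^\ast)$ and $\ileadsto(\ar) \subseteq \ileadsto(\ar^\ast)$ yields at once $\ibeatse(\ar) \subseteq \ibeatse(\argscldec)$ and $\ileadsto(\ar) \subseteq \ileadsto(\argscldec)$, i.e.\ $\ar \in \argsrreplcldec$. So fix $\ar \in \argscldef$ together with a finite witness $\set{d_1, \dots, d_m} \subseteq \argscldec$ satisfying $\ibeatseinv(\ar) \subseteq \ibeatse(\set{d_1,\dots,d_m})$. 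Two preliminary observations drive everything. First, \cref{def:justUnstSet} forces every $d \in \argscldec$ to be \emph{stable}: if $d \ibeatse \ar'$ and $d \nibeatse \ar'$ held for some $\ar'$, Answerability would produce an argument trumping $d$, contradicting its decisiveness; hence $d \ibeatse \ar' \Leftrightarrow d \ibeatsst \ar'$, so $\ibeatse(d) = \ibeatsst(d)$. Second, no $d_i$ trumps $\ar$: if $d_i \ibeatse \ar$ then $d_i \in \ibeatseinv(\ar) \subseteq \ibeatse(\set{d_1,\dots,d_m})$, so some $d_j$ would trump the decisive argument $d_i$ — impossible.

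I would then strip off the attackers of $\ar$ one defender at a time. Set $\ar^{(0)} = \ar$ and, for $i = 1, \dots, m$, apply \cref{def:closedSet} to the pair $(\ar^{(i-1)}, d_i)$. If $\ar^{(i-1)}$ is already decisive I stop and take it as $\ar^\ast$; otherwise $d_i \neq \ar^{(i-1)}$, $d_i$ is decisive, and $d_i$ does not trump $\ar^{(i-1)}$ — the latter because $\ibeatseinv(\ar^{(i-1)}) \subseteq \ibeatseinv(\ar)$ (attacker sets only shrink along the construction) and $d_i \notin \ibeatseinv(\ar)$ by the second observation. The condition thus supplies some $\ar^{(i)} \in \clargs$ that replaces $\ar^{(i-1)}$ and satisfies $\ibeatseinv(\ar^{(i)}) \subseteq \ibeatseinv(\ar^{(i-1)}) \setminus \ibeatsst(d_i)$.

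It then remains to propagate two invariants. Replacement is transitive (it is a pair of image inclusions, which chain under $\subseteq$), so each $\ar^{(i)}$ replaces $\ar$. And unfolding the attacker inclusions gives $\ibeatseinv(\ar^{(m)}) \subseteq \ibeatseinv(\ar) \setminus \bigcup_{l=1}^{m} \ibeatsst(d_l)$. Using stability, $\bigcup_{l} \ibeatsst(d_l) = \bigcup_{l} \ibeatse(d_l) = \ibeatse(\set{d_1,\dots,d_m}) \supseteq \ibeatseinv(\ar)$, whence $\ibeatseinv(\ar^{(m)}) = \emptyset$. Thus $\ar^{(m)}$ is a decisive argument lying in $\clargs$, i.e.\ $\ar^{(m)} \in \argscldec$, and it replaces $\ar$; taking $\ar^\ast = \ar^{(m)}$ closes the argument. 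The degenerate cases ($\ar$ already decisive, or an empty witness set, which forces $\ibeatseinv(\ar) \subseteq \ibeatse(\emptyset) = \emptyset$) are immediate.

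The main obstacle, and the point where both hypotheses are genuinely needed, is the matching between the set $\ibeatsst(d_i)$ that \cref{def:closedSet} removes and the set $\ibeatse(d_i)$ of attackers the finite defense actually controls: the defense only guarantees $d_i \ibeatse \ar_c$ (trumping in \emph{some} perspective) for each attacker $\ar_c$, whereas reinstatement discards only the attackers in $\ibeatsst(d_i)$ (trumped in \emph{all} perspectives). Stability, obtained from Answerability, is exactly what bridges this gap, while Closed under reinstatement supplies the successive replacers. The rest is bookkeeping: verifying the hypotheses of \cref{def:closedSet} at each step — in particular that the $d_i$ never trump the current argument — and checking that the attacker sets shrink monotonically so that the finite induction terminates with an unattacked argument.
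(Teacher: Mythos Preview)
Your proof is correct and follows essentially the same approach as the paper's: both use Answerability to upgrade $\ibeatse(d) = \ibeatsst(d)$ for decisive $d$, then repeatedly apply Closed under reinstatement to peel off one defender at a time, using transitivity of replacement to conclude that the final, unattacked argument replaces the original. The paper packages the inductive step as an intermediate result (``given a defense set $\args$ and one $\ar_1 \in \args$, the replacement is defended by $\args \setminus \{\ar_1\}$'') while you track the shrinking attacker set $\ibeatseinv(\ar^{(i)})$ directly, but this is a cosmetic difference in bookkeeping rather than a genuinely different route.
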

\begin{proof}
	The strategy for this proof is the following. If $\ar \in \argscldef$, some finite set of arguments defends $\ar$. 
	We wish to pick defenders one by one, replacing $\ar$ by applying Closed under reinstatement to $\ar$ and the chosen defender, obtaining an argument that fewer arguments trump, and then show that iterating the process yields a decisive argument replacing $\ar$.
	
	We need the following intermediate result. Assume a set of arguments $\args \subseteq \argscldec$ is given, together with an argument $\ar_1 \in \args$ and an argument $\ar^\text{r}_1 \in \clargs$ defended by $\args$. Then, there exists an argument $\ar^\text{r}_2 \in \clargs$ replacing $\ar^\text{r}_1$ and defended by $\args \setminus \{\ar_1\}$.
	
	Indeed, from Answerability, because $\ar_1 \in \argscldec$, $\ibeatse(\ar_1) = \ibeatsst(\ar_1)$. Also, as $\ar_1 \in \argscldec$, we can assume that $\ar_1 ≠ \ar^\text{r}_1$, otherwise $\ar^\text{r}_1 \in \argscldec$ and the result is obtained by taking $\ar^\text{r}_2 = \ar^\text{r}_1$. And $\ar_1$ does not trump $\ar^\text{r}_1$, otherwise $\ar^\text{r}_1$ is trumped by a decisive argument and thus not defended. We can thus apply Closed under reinstatement to $(\ar_1, \ar^\text{r}_1)$. 
	We obtain that for some $\ar^\text{r}_2 \in \clargs$, $\ar^\text{r}_2$ replaces $\ar^\text{r}_1$ and $\ibeatseinv(\ar^\text{r}_2) \subseteq \ibeatseinv(\ar^\text{r}_1) \setminus \ibeatse(\ar_1)$. Thus, $\args \setminus \{\ar_1\}$ defends $\ar^\text{r}_2$: any argument trumping $\ar^\text{r}_2$ already trumped $\ar^\text{r}_1$, hence, is trumped by $\args$ (because that set defends $\ar^\text{r}_1$), and is not trumped by $\ar_1$. This proves our intermediate result.
	
	Coming back to the main point, we know that a finite coalition $\args \subseteq \argscldec$ defends $\ar \in \clargs$. Define $\ar^\text{r}_1 = \ar$ and apply the intermediate result repetitively to obtain an argument $\ar^\text{r}_2 \in \clargs$ replacing $\ar$ and defended by $\args$ minus one element, then $\ar^\text{r}_3 \in \clargs$ replacing $\ar^\text{r}_2$, thus, replacing $\ar$ (because replacement is transitive) and defended by $\args$ minus two elements, and so on, until obtaining a replacer defended by $\emptyset$, thus, decisive.
\end{proof}

\begin{lemma}[$\allargs = \argsreplclres ∪ \ibeatse(\argsclres)$]
	If $\clargs$ is covering, any argument is either essentially replaceable by $\argsclres$, or attacked by an argument from $\argsclres$; formally: $\allargs = \argsreplclres ∪ \ibeatse(\argsclres)$.
\end{lemma}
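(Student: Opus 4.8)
The plan is to prove only the inclusion $\allargs \subseteq \argsreplclres ∪ \ibeatse(\argsclres)$, since the reverse inclusion is immediate: both $\argsreplclres$ and $\ibeatse(\argsclres)$ are by definition subsets of $\allargs$. To establish the forward inclusion I would partition $\allargs$ into the arguments lying outside $\clargs$ and those lying inside, and treat each block separately, the second being where the actual content sits.

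For an argument $\ar \in \allargs \setminus \clargs$, I would simply invoke the hypothesis that $\clargs$ is covering (\cref{def:cover}), which asserts precisely that every such $\ar$ is unnecessary. Unfolding \cref{def:unnecessary}, being unnecessary means $\ar \in \ibeatse(\argsclres)$ or $\ar$ is essentially replaceable by $\argsclres$, the latter being exactly the statement $\ar \in \argsreplclres$. Hence every argument outside $\clargs$ already lands in the target union, giving $\allargs \setminus \clargs \subseteq \argsreplclres ∪ \ibeatse(\argsclres)$.

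The genuinely new step is to place each $\ar \in \clargs$ in the union, and here I would split on whether $\ar$ is resistant. If $\ar$ is \emph{not} resistant, then by the definition of $\argsclres = \clargs ∩ \overline{\ibeatse(\argscldec)}$ there is some decisive argument $\ar' \in \argscldec$ with $\ar' \ibeatse \ar$; since a decisive argument is trumped by nothing it is in particular resistant, so $\argscldec \subseteq \argsclres$, whence $\ar' \in \argsclres$ and therefore $\ar \in \ibeatse(\argsclres)$. If instead $\ar$ \emph{is} resistant, then $\ar \in \clargs ∩ \overline{\ibeatse(\argscldec)} = \argsclres$; because $\argsclres$ contains $\ar$ itself we have $\ibeatse(\ar) \subseteq \ibeatse(\argsclres)$ and $\ileadsto(\ar) \subseteq \ileadsto(\argsclres)$, so a fortiori $(\ibeatse(\ar) ∩ \clargs) \subseteq \ibeatse(\argsclres)$, which says that $\argsclres$ essentially replaces $\ar$, i.e. $\ar \in \argsreplclres$. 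Either way $\ar$ lies in the union, so $\clargs \subseteq \argsreplclres ∪ \ibeatse(\argsclres)$, and combining the two blocks yields the claim.

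The proof is, after the definitions are unrolled, little more than careful bookkeeping; the only point that warrants a moment's attention — the main obstacle, such as it is — is recognizing that the set $\argsclres$ witnessing essential replacement contains the resistant argument itself, so that a resistant argument of $\clargs$ is replaced vacuously by itself rather than needing a separately produced replacer. The two auxiliary inclusions $\argscldec \subseteq \argsclres$ and $\{\ar\} \subseteq \argsclres$ (for resistant $\ar$) are the only facts that must be made explicit before the case split closes.
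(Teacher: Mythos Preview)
Your proof is correct and follows essentially the same approach as the paper's: the paper also decomposes $\allargs$ into $\overline{\clargs}$, $\clargs \cap \ibeatse(\argscldec)$, and $\clargs \cap \overline{\ibeatse(\argscldec)} = \argsclres$, handling each case exactly as you do (covering for the first, $\argscldec \subseteq \argsclres$ for the second, and the observation that $\argsclres$ essentially replaces each of its own members for the third). Your presentation is slightly more explicit about why a resistant argument is essentially replaced by $\argsclres$, but the structure and content are the same.
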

\begin{proof}
	We consider in turn three sets whose union yields $\allargs$: $\overline{\clargs}$, $\clargs ∩ \ibeatse(\argscldec)$ and $\clargs ∩ \overline{\ibeatse(\argscldec)}$.
	
	First, $\overline{\clargs} \subseteq \argsreplclres ∪ \ibeatse(\argsclres)$: from covering, if $\ar \notin \clargs$, $\ar$ is unnecessary, and by definition, $\ar$ is unnecessary iff $\ar \in \argsreplclres$ or $\ar \in \ibeatse(\argsclres)$. 
	
	Second, $\clargs ∩ \ibeatse(\argscldec) \subseteq \ibeatse(\argscldec) \subseteq \ibeatse(\argsclres)$, because $\argscldec \subseteq \argsclres$.
	
	Third, $\clargs ∩ \overline{\ibeatse(\argscldec)} \subseteq \argsreplclres$, because $\clargs ∩ \overline{\ibeatse(\argscldec)} = \argsclres$ by definition.
	
	We have considered all three possible cases, and the conclusion obtains in all cases.
\end{proof}

\begin{lemma}[$\argsclres \subseteq \argscldef$]
	If $\clargs$ is CAC, any argument in $\clargs$ that has no argument that decisively trumps it is finitely defended; formally: $\argsclres \subseteq \argscldef$.
\end{lemma}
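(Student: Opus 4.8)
The plan is to reduce the statement to showing that every resistant argument $\ar \in \argsclres$ is actually $\clargs$-defended, and then to upgrade this to \emph{finite} defence via the width bound. So the core target is: for $\ar \in \argsclres$, every attacker $\ar_c \ibeatse \ar$ is trumped by a decisive argument of $\clargs$, i.e. $\ar_c \in \ibeatse(\argscldec)$. Once that holds, $\ar$ is $\clargs$-defended, and applying \emph{bounded width} (\cref{def:setB.b}) to $\ar \in \clargs$ converts "$\clargs$-defended" into "$(j,\clargs)$-defended", which is exactly finite defence by $\argscldec$, yielding $\ar \in \argscldef$.

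To reach the core target I would set up a well-founded induction whose rank comes from \emph{bounded length} (\cref{def:setB.lg}). Using the relation $Q$ on $\clargs$, define $\operatorname{rank}(\ar)$ as the number of steps in the longest $Q$-chain in $\clargs$ ending at $\ar$; since no such chain has more than $k$ steps, the rank is a well-defined integer in $\{0,\dots,k\}$, and $Q$ is acyclic on $\clargs$, so $\ar' Q \ar$ forces $\operatorname{rank}(\ar') < \operatorname{rank}(\ar)$. I would then prove "$\ar \in \argsclres$ is $\clargs$-defended" by strong induction on $\operatorname{rank}(\ar)$, invoking the already-proved lemma $\argscldef \subseteq \argsrreplcldec$ and the covering lemma $\allargs = \argsreplclres \cup \ibeatse(\argsclres)$.

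For the inductive step, fix $\ar \in \argsclres$ and an attacker $\ar_c \ibeatse \ar$, and split using the covering lemma. If $\ar_c \in \argsreplclres$, then $\ar \in \ibeatse(\ar_c) \cap \clargs$ and essential replaceability gives a resistant $\ar_1 \in \argsclres$ with $\ar_1 \ibeatse \ar$; since $\ar_1 Q \ar$ lowers the rank, the induction hypothesis makes $\ar_1$ $\clargs$-defended, hence finitely defended (width), hence replaceable by a decisive set (the lemma $\argscldef \subseteq \argsrreplcldec$), so $\ar \in \ibeatse(\ar_1) \subseteq \ibeatse(\argscldec)$ — a decisive argument of $\clargs$ trumps $\ar$, contradicting $\ar \in \argsclres$; thus this case cannot occur. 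In the remaining case $\ar_c \in \ibeatse(\argsclres)$ I pick a resistant $\ar_1$ with $\ar_1 \ibeatse \ar_c$; now $\ar_1 \ibeatse \ar_c \ibeatse \ar$ yields $\ar_1 Q \ar$ via the two-step clause of $Q$, so $\ar_1$ again has smaller rank and, by the same chain (hypothesis, then width, then $\argscldef \subseteq \argsrreplcldec$), is replaceable by some $\args \subseteq \argscldec$; since $\ar_c \in \ibeatse(\ar_1) \subseteq \ibeatse(\args)$, a decisive argument of $\clargs$ trumps $\ar_c$, as required. The base case $\operatorname{rank}(\ar)=0$ is immediate, since either branch would produce an $\ar_1 Q \ar$ inside $\clargs$, contradicting rank $0$; hence $\ar$ has no attacker and is vacuously $\clargs$-defended.

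The main obstacle, and the step I would scrutinise most, is the branch $\ar_c \in \argsreplclres$: one is tempted to defend $\ar$ directly against $\ar_c$, but essential replaceability only controls what $\ar_c$ trumps, not what trumps $\ar_c$, so no direct defence is available. The resolution is to observe that this branch is self-defeating — it forces $\ar$ itself to be decisively trumped, contradicting its resistance — so it simply never arises, and all the genuine work happens in the $\ibeatse(\argsclres)$ branch. I would also verify carefully that the two-step clause of $Q$ is what makes $\ar_1 \ibeatse \ar_c \ibeatse \ar$ count as $\ar_1 Q \ar$ (guaranteeing a strict rank drop), and that the arguments extracted from the replacement genuinely lie in $\argscldec$, which is precisely what being "defended" demands.
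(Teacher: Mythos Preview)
Your proof is correct and follows essentially the same route as the paper: the paper phrases it as infinite descent (setting $\args = \argsclres \cap \overline{\argscldef}$ and showing that every element of $\args$ has a $Q$-predecessor in $\args$, contradicting bounded length), whereas you phrase the same mechanism as strong induction on a $Q$-rank; the case split via the covering lemma and the appeal to $\argscldef \subseteq \argsrreplcldec$ are identical. One small gap to close: the definition of ``$\clargs$-defended'' demands $\ar_{cc} \ibeatsst \ar_c$, not merely $\ar_{cc} \ibeatse \ar_c$, so concluding ``$\ar$ is $\clargs$-defended'' from $\ar_c \in \ibeatse(\argscldec)$ needs one line of Answerability (for decisive $\ar_{cc} \in \clargs$, $\ibeatse(\ar_{cc}) = \ibeatsst(\ar_{cc})$), which you use implicitly but never name.
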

\begin{proof}
	Recall that the relation $Q$ is defined in Bounded length (\cref{def:setB.lg}) as $Q = (\ibeatse ∪ (\ibeatse \circ \ibeatse)) ∩ (\clargs × \clargs)$. Observe that, given any set $\args ≠ \emptyset$, Bounded Length forbids that $\forall \ar \in \args: \args ∩ Q^{-1}(\ar) ≠ \emptyset$. Otherwise, applying $Q^{-1}$ to an element of $\args$ would always yield some element in $\args$, and $Q^{-1}$ could then be applied any desired number of times starting from any $\ar \in \args$, thereby building a chain as long as desired. Accordingly, for any set $S$, Bounded Length imposes that if $\forall \ar \in \args: \args ∩ Q^{-1}(\ar) ≠ \emptyset$, then $\args = \emptyset$.
	
	Define $\args = \argsclres ∩ \overline{\argscldef}$. We show that, given any $\ar \in \args$, $\args ∩ Q^{-1}(\ar) ≠ \emptyset$. This suffices to obtain $\args = \emptyset$ and, therefore, our desired conclusion.
	
	Pick any $\ar \in \args$. Towards exhibiting an argument in $\args ∩ Q^{-1}(\ar)$, we want first to exhibit some argument $\ar'$ that is a) trumped by some argument $\ar^* \in \argsclres$, thus $\ar' \in \ibeatse(\argsclres)$; b) not trumped by any argument in $\argscldec$, thus $\ar' \notin \ibeatse(\argscldec)$; c) equal to $\ar$ or trumping $\ar$. As a second step, from the existence of such an $\ar'$ we will then prove that $\ar^*$, the particular trumping argument in part a), belongs to $\args$ (thanks to parts a) and b)), and belongs to $Q^{-1}(\ar)$ (thanks to part c)).
	
	Our first step thus amounts to show that some $\ar'$ satisfies our three conditions above.

	From $\ar \notin \argscldef$ and $\ar \in \clargs$, we know that $\ar$ is not finitely defended, and using the contrapositive of Bounded width, we obtain that $\ar$ is not infinitely defended either. Hence, by definition of defense, there exists some $\ar_1 \in \overline{\ibeatse(\argscldec)} ∩ \ibeatseinv(\ar)$.
	And, applying [$\allargs = \argsreplclres ∪ \ibeatse(\argsclres)$], either $\ar_1 \in \argsreplclres$, or $\ar_1 \in \ibeatse(\argsclres)$.
	
	If $\ar_1 \in \argsreplclres$, $\ar \in \ibeatse(\argsclres)$. Besides, because $\ar \in \args$, $\ar \in \argsclres$. Thus taking $\ar' = \ar$ satisfies our three conditions.
	
	And if $\ar_1 \in \ibeatse(\argsclres)$, because $\ar_1 \in \overline{\ibeatse(\argscldec)}$), taking $\ar' = \ar_1$ satisfies our three conditions.
	
	For our second step, consider an argument $\ar^* \in \argsclres$ that trumps $\ar'$ (we know this is possible thanks to part a)). Thanks to part b), we know that $\ar'$ is not trumped by any argument in $\argscldec$, and from [$\argscldef \subseteq \argsrreplcldec$], we know that if $\ar'$ was trumped by an argument in $\argscldef$, it would be trumped by an argument in $\argscldec$, thus, $\ar'$ is not trumped by any argument in $\argscldef$. Because $\ar^* \ibeatse \ar'$, we know that $\ar^* \notin \argscldef$. Thus, $\ar^* \in \args$. Finally, $\ar^* \ibeatse \ar$ or $\ar^* \ibeatse \ar' \ibeatse \ar$ (thanks to part c)), thus, $\ar^* \in Q^{-1}(\ar)$.
\end{proof}

\begin{lemma}[$\allargs = \argsreplcldec ∪ \ibeatse(\argscldec)$]
	If $\clargs$ is CAC, any argument is either essentially replaceable by decisive arguments from $\clargs$, or attacked by a decisive argument from $\clargs$; formally: $\allargs = \argsreplcldec ∪ \ibeatse(\argscldec)$.
\end{lemma}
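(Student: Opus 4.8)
The plan is to establish the nontrivial inclusion $\allargs \subseteq \argsreplcldec \cup \ibeatse(\argscldec)$, the reverse being immediate since both $\argsreplcldec$ and $\ibeatse(\argscldec)$ consist of arguments of $\allargs$. The idea is to upgrade the decomposition already obtained in the covering lemma, $\allargs = \argsreplclres \cup \ibeatse(\argsclres)$, by replacing ``resistant'' with ``decisive'' throughout, using the two preceding lemmas to show that, under CAC, the resistant arguments of $\clargs$ are themselves replaceable by decisive ones.

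First I would chain the lemmas $\argsclres \subseteq \argscldef$ and $\argscldef \subseteq \argsrreplcldec$ to obtain $\argsclres \subseteq \argsrreplcldec$: every resistant argument of $\clargs$ is replaceable, in the full sense, by decisive arguments of $\clargs$. From this I would extract the two monotone facts that drive the whole argument. For each $\ar^* \in \argsclres$ there is $\args \subseteq \argscldec$ replacing it, whence $\ibeatse(\ar^*) \subseteq \ibeatse(\args) \subseteq \ibeatse(\argscldec)$ and $\ileadsto(\ar^*) \subseteq \ileadsto(\argscldec)$; taking unions over $\argsclres$ yields $\ibeatse(\argsclres) \subseteq \ibeatse(\argscldec)$ and $\ileadsto(\argsclres) \subseteq \ileadsto(\argscldec)$.

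Then I would take an arbitrary $\ar \in \allargs$ and split it, via the covering lemma, into $\ar \in \argsreplclres$ or $\ar \in \ibeatse(\argsclres)$. In the second case $\ar$ is trumped by some $\ar^* \in \argsclres$, so $\ar \in \ibeatse(\argsclres) \subseteq \ibeatse(\argscldec)$ by the inclusion just derived, and we are done. In the first case $\ar$ is essentially replaceable by $\argsclres$, i.e.\ $(\ibeatse(\ar) \cap \clargs) \subseteq \ibeatse(\argsclres)$ and $\ileadsto(\ar) \subseteq \ileadsto(\argsclres)$; composing with the two monotone facts gives $(\ibeatse(\ar) \cap \clargs) \subseteq \ibeatse(\argscldec)$ and $\ileadsto(\ar) \subseteq \ileadsto(\argscldec)$, which is precisely $\ar \in \argsreplcldec$. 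Either way $\ar$ lands in $\argsreplcldec \cup \ibeatse(\argscldec)$.

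The main obstacle is bookkeeping the distinction between essential replacement, which only controls what is trumped inside $\clargs$, and full replacement, which controls all trumped arguments. The trump-propagation in both cases needs the \emph{full} replaceability $\argsclres \subseteq \argsrreplcldec$ rather than mere essential replaceability, so it is crucial that the chain of preceding lemmas delivers $\argsrreplcldec$ and not just $\argsreplcldec$; conversely, the target conclusion asks only for essential replaceability, so the transitive composition in the first case goes through without having to match trumps outside $\clargs$.
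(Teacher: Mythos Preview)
Your proof is correct and follows exactly the route the paper intends: the paper's own proof is a single sentence citing the three preceding lemmas [$\allargs = \argsreplclres \cup \ibeatse(\argsclres)$], [$\argsclres \subseteq \argscldef$], and [$\argscldef \subseteq \argsrreplcldec$], and your write-up simply unpacks how these combine. Your closing remark about needing \emph{full} replaceability $\argsclres \subseteq \argsrreplcldec$ to propagate trumps, while only \emph{essential} replaceability is required in the target, is a useful clarification that the paper leaves implicit.
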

\begin{proof}
	This follows from [$\allargs = \argsreplclres ∪ \ibeatse(\argsclres)$], [$\argsclres \subseteq \argscldef$] and [$\argscldef \subseteq \argsrreplcldec$].
\end{proof}

\begin{theorem}[CAC implies efficiency]
	If $\clargs$ is CAC, $\clargs$ is efficient.
	\label{thm:CACThusEfficient}
\end{theorem}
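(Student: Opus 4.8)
The plan is to read off both halves of efficiency from the final preparatory lemma, $\allargs = \argsreplcldec ∪ \ibeatse(\argscldec)$, which partitions every argument into those essentially replaceable by the decisive arguments of $\clargs$ and those trumped by such an argument. Since efficiency asserts the conjunction $T_i = \ileadsto(\argscldec)$ and $\bigl(t \notin T_i \Leftrightarrow \ileadstoinv(t) \subseteq \ibeatsst(\argscldec)\bigr)$, I would establish these two statements in turn, each time feeding the relevant supporting or decisive argument into this dichotomy and discarding the branch that is ruled out by the hypothesis at hand.

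For the first equality I would argue the two inclusions separately. The inclusion $\ileadsto(\argscldec) \subseteq T_i$ is immediate, since an argument in $\argscldec = \clargs ∩ \overline{\ibeatse(\allargs)}$ is decisive in all of $\allargs$, so any proposition it supports is justifiable by \cref{def:acceptreject}. For the reverse inclusion I would take $t \in T_i$ and use justifiability to obtain a decisive $\ar$ with $\ar \ileadsto t$. The master lemma places $\ar$ in one of its two classes, but decisiveness forbids $\ar \in \ibeatse(\argscldec) \subseteq \ibeatse(\allargs)$; hence $\ar \in \argsreplcldec$, and essential replaceability gives $\ileadsto(\ar) \subseteq \ileadsto(\argscldec)$, so that $t \in \ileadsto(\argscldec)$.

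For the equivalence I would handle both directions. The direction $(\Leftarrow)$ is easy: if every supporter of $t$ lies in $\ibeatsst(\argscldec)$ and $t$ were nonetheless justifiable, its decisive supporter would be always-trumped by some decisive argument of $\clargs$, hence — via the standing axiom $¬(\ar_2 \ibeatse \ar_1) ⇒ \ar_2 \nibeatse \ar_1$, which yields $\ibeatsst \subseteq \ibeatse$ — trumped outright, contradicting its decisiveness; thus $t \notin T_i$. For $(\Rightarrow)$ I would assume $t \notin T_i$, pick any $\ar$ with $\ar \ileadsto t$, and apply the master lemma. The branch $\ar \in \argsreplcldec$ again forces $t \in \ileadsto(\argscldec) \subseteq T_i$, contradicting the hypothesis, so it is excluded; the surviving branch $\ar \in \ibeatse(\argscldec)$ supplies a decisive $\ar_c \in \argscldec$ with $\ar_c \ibeatse \ar$.

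The one genuinely delicate step — where I expect the real work — is upgrading $\ar_c \ibeatse \ar$ (trumping in \emph{some} perspective) to $\ar_c \ibeatsst \ar$ (trumping in \emph{every} perspective), which is exactly what membership in $\ibeatsst(\argscldec)$ demands. Here I would invoke Answerability (\cref{def:justUnstSet}) with $\ar' = \ar_c \in \clargs$: since $\ar_c$ is decisive, no argument trumps it, so the premise of the condition can never be discharged at $\ar_c$, forcing $¬(\ar_c \ibeatse \ar ∧ \ar_c \nibeatse \ar)$; combined with $\ar_c \ibeatse \ar$ this gives $¬(\ar_c \nibeatse \ar)$, i.e. $\ar_c \ibeatsst \ar$. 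This is precisely the identity $\ibeatse(\ar_c) = \ibeatsst(\ar_c)$ for decisive $\ar_c$ already exploited in the proof of $\argscldef \subseteq \argsrreplcldec$. Collecting the two branches yields $\ileadstoinv(t) \subseteq \ibeatsst(\argscldec)$, which together with the first equality establishes efficiency.
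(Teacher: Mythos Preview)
Your proof is correct and uses exactly the same ingredients as the paper's: the master lemma $\allargs = \argsreplcldec \cup \ibeatse(\argscldec)$ together with the Answerability upgrade $\ibeatse(\argscldec) \subseteq \ibeatsst(\argscldec)$ for decisive arguments in $\clargs$. The only cosmetic difference is that the paper packages the argument as a single cyclic chain $\ileadsto(\argsreplcldec) \subseteq \ileadsto(\argscldec) \subseteq T_i \subseteq \ileadsto(\overline{\ibeatsst(\argscldec)}) \subseteq \ileadsto(\argsreplcldec)$ and reads off both parts of efficiency from the resulting equalities, whereas you unfold the same inclusions as separate case analyses.
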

\begin{proof}
	We prove that $\ileadsto(\argsreplcldec) \subseteq \ileadsto(\argscldec) \subseteq T_i \subseteq \ileadsto(\overline{\ibeatsst(\argscldec)}) \subseteq \ileadsto(\argsreplcldec)$.%
	
	This proves the point, as it shows that
	\begin{enumerate}[label={\roman*}.]
		\item $T_i = \ileadsto(\argscldec)$, and
		\item $t \notin T_i ⇔ \ileadstoinv(t) \subseteq \ibeatsst(\argscldec)$, because $T_i = \ileadsto(\overline{\ibeatsst(\argscldec)})$.
	\end{enumerate}

	That $\ileadsto(\argsreplcldec) \subseteq \ileadsto(\argscldec) \subseteq T_i$ follows from the definitions of $\argsreplcldec$ and $T_i$.
	
	The next subset relation holds because if some decisive argument supports $t$, that argument is not in $\ibeatsst(\argscldec)$.

	Finally, Answerability mandates that $\ibeatse(\argscldec) \subseteq \ibeatsst(\argscldec)$, from which it follows that $\overline{\ibeatsst(\argscldec)} \subseteq \overline{\ibeatse(\argscldec)}$, and using [$\allargs = \argsreplcldec ∪ \ibeatse(\argscldec)$], $\overline{\ibeatse(\argscldec)} \subseteq \argsreplcldec$.
\end{proof}

\begin{theorem}[Validity of $\eta$]
	\label{thm:efficientThusValid}
	Assume $\clargs$ is efficient and $\eta$, a model of the decision situation, is $\clargs$-operationally valid. Then $T_i = T_\eta$.
\end{theorem}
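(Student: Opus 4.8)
The plan is to establish the two inclusions $T_\eta \subseteq T_i$ and $T_i \subseteq T_\eta$ separately, in each case arguing by contradiction and playing the three clauses of $\clargs$-operational validity against the two characterizations of $T_i$ furnished by efficiency. The single recurring lever is the defining property of a decisive argument: no argument trumps it in any perspective, so a relation of the form $\ar_c \ibeatse \ar$ is outright impossible as soon as $\ar$ is decisive. I will also use freely that $\ar_c \ibeatsst \ar$ implies $¬(\ar_c \nibeatse \ar)$, which is immediate from the definition of $\ibeatsst$.

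For $T_\eta \subseteq T_i$, I would take $t \in T_\eta$, pick an argument $\ar \in \allargs$ with $\ar \mleadsto t$, and assume toward a contradiction that $t \notin T_i$. The first clause of operational validity upgrades $\ar \mleadsto t$ to $\ar \ileadsto t$, so $\ar \in \ileadstoinv(t)$; efficiency then gives $\ileadstoinv(t) \subseteq \ibeatsst(\argscldec)$, producing a decisive $\ar_c \in \argscldec \subseteq \clargs$ with $\ar_c \ibeatsst \ar$, hence $¬(\ar_c \nibeatse \ar)$. Feeding this $\ar_c \in \clargs$ into the second clause of operational validity, the disjunct $\ar_c \nibeatse \ar$ is excluded, so the clause forces some $\ar_{cc}$ with $\ar_{cc} \ibeatse \ar_c$ --- which is impossible because $\ar_c$ is decisive. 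The contradiction yields $t \in T_i$.

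For $T_i \subseteq T_\eta$, I would take $t \in T_i$ and assume toward a contradiction that $t$ is not supported by $\eta$. Efficiency in the form $T_i = \ileadsto(\argscldec)$ hands me a decisive $\ar \in \argscldec \subseteq \clargs$ with $\ar \ileadsto t$. Since $t$ is unsupported by $\eta$, the last clause of operational validity applies to this $\ar \in \clargs$ satisfying $\ar \ileadsto t$ and forces some $\ar_c$ with $\ar_c \ibeatse \ar$, again contradicting decisiveness. Hence $t \in T_\eta$, and the two inclusions give $T_i = T_\eta$.

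I do not expect a genuine obstacle here: once the definitions are unfolded the argument is short. The only point requiring care is bookkeeping --- making sure that each clause of $\clargs$-operational validity is invoked with a witness that actually lies in $\clargs$ (here always a decisive argument drawn from $\argscldec \subseteq \clargs$), and correctly translating between $\ibeatsst$, $\ibeatse$ and $\nibeatse$ through their definitions. All the real work has already been carried out in proving that CAC implies efficiency; this theorem is meant to be the clean payoff of that effort.
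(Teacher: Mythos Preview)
Your proof is correct and follows essentially the same approach as the paper's own proof. The only cosmetic difference is that you frame both inclusions as arguments by contradiction, whereas the paper phrases the first direction as a direct argument (showing $\ar \notin \ibeatsst(\argscldec)$ and then invoking the contrapositive of the efficiency characterization) and the second direction as a contrapositive; the underlying levers---the three clauses of $\clargs$-operational validity matched against the two characterizations of $T_i$ given by efficiency, with decisiveness blocking any $\ibeatse$-attack---are identical.
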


\begin{proof}
	Recall that a model is \emph{$\clargs$-operationally valid} iff
	for all $(\ar \mleadsto t)$, $\ar \in \allargs$, we have $[\ar \ileadsto t]$ and $[\forall \ar_c \in \clargs: (\ar_c \nibeatse \ar) ∨ (\exists \ar_{cc} \in \allargs \suchthat \ar_{cc} \mbeats \ar_c ∧ \ar_{cc} \ibeatse \ar_c)]$, and when $t$ is not supported by $\eta$, $\forall \ar \in \clargs \suchthat \ar \ileadsto t: (\exists \ar_c \in \allargs \suchthat \ar_c \mbeats \ar ∧ \ar_c \ibeatse \ar)$.
	
	Consider $t \in T_\eta$. By definition, some $\ar \mleadsto t$. From operational validity of $\eta$, we obtain that $\ar \ileadsto t$ and $\forall \ar_c \ibeatsst \ar: \ar_c \notin \clargs ∩ \overline{\ibeatseinv(\allargs)}$ (because $[\ar_c \ibeatsst \ar ∧ \ar_c \in \clargs] ⇒ \ar_c \in \ibeatseinv(\allargs)$). Hence, $\ar \notin \ibeatsst(\clargs ∩ \overline{\ibeatseinv(\allargs)})$, thus $\ileadstoinv(t) \nsubseteq \ibeatsst(\clargs ∩ \overline{\ibeatseinv(\allargs)})$. Efficiency of $\clargs$ brings $t \in T_i$. 

	If $t \notin T_\eta$, from operational validity of $\eta$, no decisive argument in $\clargs$ may support $t$, equivalently, $t \notin \ileadsto(\clargs ∩ \overline{\ibeatse(\allargs)})$, and from efficiency, $t \notin T_i$.
\end{proof}

We can now prove \cref{thm:clearcutStrong}.
\begin{proof}[Proof of \cref{thm:clearcutStrong}]
	From [CAC implies efficiency], we obtain that $\clargs$ is efficient. It then follows from the efficiency of $\clargs$ that the decision situation is clear-cut and that a $\clargs$-operationally valid model exists. The last consequence is given by \cref{thm:efficientThusValid}.
\end{proof}

The following theorem may help clarify the relationship between efficiency, existence of CAC arguments, and the situation admitting a model as we conceive it.

\begin{theorem}[CAC subset equivalent to efficiency]
	\label{thm:clearSubsetEquivEfficient}
	Given a decision situation $(T, \allargs,\allowbreak {\ileadsto},\allowbreak {\ibeatse}, {\nibeatse})$ and a subset of arguments $\args \subseteq \allargs$, 
	there exists a set $\clargs \subseteq \args$ that is CAC iff
	$\args$ is efficient.
\end{theorem}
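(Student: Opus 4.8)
The plan is to prove the two implications separately, using \cref{thm:CACThusEfficient} for the forward direction and an explicit construction for the converse. Throughout write $D_\args = \args \cap \overline{\ibeatse(\allargs)}$ for the decisive arguments of a set $\args$, so that efficiency of $\args$ is the pair of statements $T_i = {\ileadsto}(D_\args)$ and, for all $t$, $t \notin T_i \Leftrightarrow {\ileadstoinv}(t) \subseteq {\ibeatsst}(D_\args)$.

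\emph{Forward direction.} Suppose $\clargs \subseteq \args$ is CAC. By \cref{thm:CACThusEfficient}, $\clargs$ is efficient, and I would show efficiency transfers upward to $\args$, using only $D_\clargs \subseteq D_\args$. For the first clause, $T_i = {\ileadsto}(D_\clargs) \subseteq {\ileadsto}(D_\args) \subseteq T_i$, the last inclusion holding because any proposition supported by a decisive argument is justifiable; hence $T_i = {\ileadsto}(D_\args)$. For the second clause, $t \notin T_i$ gives ${\ileadstoinv}(t) \subseteq {\ibeatsst}(D_\clargs) \subseteq {\ibeatsst}(D_\args)$; conversely, if ${\ileadstoinv}(t) \subseteq {\ibeatsst}(D_\args)$ while $t \in T_i$, a decisive $\ar$ with $\ar \ileadsto t$ would lie in ${\ibeatsst}(D_\args)$, so some decisive argument of $\args$ would trump $\ar$, contradicting the decisiveness of $\ar$. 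Thus $\args$ is efficient.

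\emph{Converse.} Assume $\args$ is efficient and take $\clargs = D_\args$, the decisive arguments of $\args$; I would verify the five requirements of CAC. Because no element of $\clargs$ is ever trumped, one checks $\argscldec = \argsclres = \clargs$, and most conditions collapse: the relation $Q$ of \cref{def:setB.lg} is empty on $\clargs$, giving bounded length; every $\ar \in \clargs$ is vacuously $\clargs$-defended, hence $(0,\clargs)$-defended, giving bounded width with $j = 0$; and in \cref{def:closedSet} no $\ar_2$ satisfies $\ar_2 \ibeatse \ar_1$ for $\ar_1 \in \clargs$, so $\ar_1$ replaces itself and the condition is vacuous. For covering (\cref{def:cover}) I would use efficiency: if $\ar \in \allargs \setminus \clargs$ supports some untenable $t$, then $\ar \in {\ileadstoinv}(t) \subseteq {\ibeatsst}(D_\args) \subseteq {\ibeatse}(\argsclres)$, so $\ar$ is trumped by a resistant argument; otherwise ${\ileadsto}(\ar) \subseteq T_i = {\ileadsto}(\clargs)$ while ${\ibeatse}(\ar) \cap \clargs = \emptyset$, so $\ar$ is essentially replaceable by $\argsclres$. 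Either way $\ar$ is unnecessary.

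The main obstacle is the last requirement, Answerability (\cref{def:justUnstSet}), for this $\clargs$. Since each $\ar' \in \clargs$ is decisive, the consequent $\exists \ar_c \suchthat \ar_c \ibeatse \ar'$ never holds, so Answerability forces that no decisive argument of $\clargs$ trumps any argument unstably, i.e. ${\ibeatse}(\ar') = {\ibeatsst}(\ar')$ for every $\ar' \in \clargs$ — precisely the property that \cref{thm:CACThusEfficient} reads off Answerability in the opposite direction. This is the crux, because efficiency controls only which propositions the decisive arguments support and which supporters of untenable propositions they always-trump, and says nothing about their possibly unstable trumping of unrelated arguments. The hard part is therefore to refine the choice of $\clargs$ — retaining, for every justifiable proposition and every untenable supporter, decisive arguments that trump stably — and to argue that efficiency already supplies such a stably-trumping covering family, so that Answerability can be secured without disturbing covering or the other conditions.
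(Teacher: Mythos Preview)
Your forward direction is correct and is exactly the paper's argument: efficiency of $\clargs$ from \cref{thm:CACThusEfficient}, then efficiency propagates to the superset $\args$ via $D_{\clargs} \subseteq D_{\args}$.

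For the converse, your construction coincides with the paper's: both take $\clargs$ to consist of decisive arguments, and both dispose of Bounded length, Bounded width and Closed under reinstatement from the single observation that nothing trumps any element of $\clargs$. The paper simply asserts that ``most of the conditions for $\clargs$ to be CAC are immediately seen to be satisfied'' and then treats only covering explicitly; it never addresses Answerability.

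You are right to flag Answerability as the crux, and your analysis is correct: for decisive $\ar' \in \clargs$ the consequent of \cref{def:justUnstSet} is unattainable, so Answerability forces $\ibeatse(\ar') = \ibeatsst(\ar')$ for every such $\ar'$. Efficiency, however, does not constrain this. Moreover, your suggested refinement---select only decisive arguments that trump stably---cannot succeed in general. Take $\allargs = \args = \{\ar, b\}$, $T = \{t\}$, with $\ar \ileadsto t$, $\ar$ decisive, $\ar \ibeatse b$, $\ar \nibeatse b$, and $b$ supporting and trumping nothing. Then $\args$ is efficient (the sole decisive argument $\ar$ supports the sole, justifiable, proposition, and there are no untenable propositions to handle), yet no $\clargs \subseteq \args$ is CAC: if $\ar \in \clargs$ then Answerability fails at the pair $(\ar, b)$; if $\ar \notin \clargs$ then covering fails, because $\ar$ supports $t$, is trumped by nothing, and cannot be essentially replaced by $\argsclres \subseteq \{b\}$ (which supports nothing). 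So the obstruction you identified is genuine and is shared by the paper's own proof; the converse as stated appears to require an additional hypothesis guaranteeing that enough decisive arguments with stable trumping behaviour are available in $\args$.
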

\begin{proof}
	From [CAC implies efficiency], if some set $\clargs \subseteq \args$ is CAC, then $\clargs$ is efficient, and because efficiency propagates to supersets, $\args$ is efficient.
	
	If $\args$ is efficient (thus, the decision situation is clear-cut), then a CAC subset $\clargs$ exists: suffices to choose as members of $\clargs$ only the decisive arguments required to support the justifiable propositions and trump the supporters $\ar \ileadsto t$ of untenable propositions. Observing that no arguments trump any argument in the resulting set (thus $\ar \ibeatse \clar$ for no $\ar \in \allargs, \clar \in \clargs$), most of the conditions for $\clargs$ to be CAC are immediately seen to be satisfied. About arguments $\ar \in \allargs \setminus \clargs$ being unnecessary, we only have to show that when $\ar \ileadsto t$, either $\ar$ is trumped by an argument from $\clargs$ that is not decisively trumped, or $\ar$ is essentially replaceable by arguments from $\clargs$. Indeed, by our construction of $\clargs$, if $\ar$ supports an accepted $t$, it is essentially replaceable, and otherwise, it is trumped by a decisive argument.
\end{proof}
\end{document}